\documentclass[preprint,3p,times]{elsarticle}
\RequirePackage{multirow,booktabs,subfigure,color,array,hhline,makecell}
\usepackage{amssymb}
\usepackage{amsmath}
\usepackage{graphicx}
\usepackage{amsthm}
\usepackage{mathrsfs}
\usepackage{indentfirst}
\usepackage[colorlinks,citecolor=blue,urlcolor=blue]{hyperref}
\allowdisplaybreaks
\usepackage[table]{xcolor}
\usepackage{tikz-network}
\usepackage{pgf}
\usepackage{tikz}
\usepackage{subfigure}
\usepackage[T1]{fontenc}\usepackage[utf8]{inputenc}
\usetikzlibrary{arrows, decorations.pathmorphing, backgrounds, positioning, fit, petri, automata}
\usetikzlibrary{shadows,arrows,positioning}
\RequirePackage{algorithm,algpseudocode}
\allowdisplaybreaks
\usepackage{changes}

\usepackage{setspace}
\newtheorem{thm}{Theorem}
\newtheorem{defin}{Definition}
\newtheorem{lem}{Lemma}
\newtheorem{assum}{Assumption}

\newtheorem{cor}{Corollary}

\allowdisplaybreaks[4]
 
\AtBeginDocument{%
	\providecommand\BibTeX{{%
			\normalfont B\kern-0.5em{\scshape i\kern-0.25em b}\kern-0.8em\TeX}}}
\journal{~}
\begin{document}
\begin{frontmatter}
\title{Individual-heterogeneous sub-Gaussian Mixture Models}
\author[label1]{Huan Qing\corref{cor1}}
\ead{qinghuan@u.nus.edu~\&~qinghuan@cqut.edu.cn}
\cortext[cor1]{Corresponding author.}
\address[label1]{School of Economics and Finance, Chongqing University of Technology, Chongqing, 400054, China}

\begin{abstract}
The classical Gaussian mixture model assumes homogeneity within clusters, an assumption that often fails in real-world data where observations naturally exhibit varying scales or intensities. To address this, we introduce the individual-heterogeneous sub-Gaussian mixture model, a flexible framework that assigns each observation its own heterogeneity parameter, thereby explicitly capturing the heterogeneity inherent in practical applications. Built upon this model, we propose an efficient spectral method that provably achieves exact recovery of the true cluster labels under mild separation conditions, even in high-dimensional settings where the number of features far exceeds the number of samples. Numerical experiments on both synthetic and real data demonstrate that our method consistently outperforms existing clustering algorithms, including those designed for classical Gaussian mixture models.
\end{abstract}

\begin{keyword}
Gaussian mixture model\sep individual heterogeneity\sep spectral clustering\sep exact recovery
\end{keyword}
\end{frontmatter}
\section{Introduction}\label{sec:intro}
Clustering is a fundamental task in modern statistics and machine learning. Its applications span diverse domains, including the identification of cancer subtypes from gene expression profiles, customer segmentation in marketing databases, community detection in social networks, disease diagnosis in medical imaging, user preference modeling in recommender systems, and pixel classification in satellite imagery. In nearly all fields that generate high‑dimensional measurements, clustering algorithms play an essential role in uncovering latent group structures \citep{jain1999data,jain2010data,fortunato2010community,fortunato2016community}. As datasets grow in size and complexity, the demand for clustering methods that are both statistically sound and computationally feasible becomes increasingly urgent.

Among probabilistic models for clustering, the Gaussian mixture model (GMM) has a long history, dating back to the analysis of crab morphometric data by \citep{pearson1894iii}. In the GMM, each observation is generated by first selecting a latent class and then drawing a vector from a class‑specific Gaussian distribution. Any continuous distribution can be approximated arbitrarily well by a finite mixture of Gaussians. This property underlies the model's extensive use in density estimation, clustering, and classification. A substantial body of work has addressed the estimation of Gaussian mixtures and the recovery of cluster labels. The expectation‑maximization (EM) algorithm \citep{dempster1977maximum} remains a standard iterative method for maximum likelihood estimation, and its convergence properties have been thoroughly analyzed \citep{wu1983convergence,xu2016global,balakrishnan2017statistical}. The K‑means algorithm \citep{lloyd1982least,mcqueen1967some} can be understood as the hard‑clustering limit of EM for spherical Gaussians with equal mixing proportions. Spectral clustering methods \citep{ng2001spectral,von2007tutorial,chen2021spectral} are also widely used: they reduce dimensionality by extracting the leading eigenvectors of a similarity matrix, after which a clustering routine is applied.

In recent years, the theoretical understanding of clustering under the GMM has been advanced dramatically. \citep{lu2016statistical} established statistical and computational guarantees for Lloyd's algorithm, demonstrating that with a suitable initialization, it achieves an exponential rate of misclassification error. \citep{loffler2021optimality} proved that spectral clustering itself is minimax optimal in the Gaussian mixture model, provided the number of clusters is fixed. \citep{abbe2022lp} extended the analysis to sub‑Gaussian mixtures, obtaining exponential misclassification rates for spectral clustering and, for the two‑component symmetric case, an explicit sharp threshold for exact recovery. \citep{zhang2024leave} further developed a leave‑one‑out perturbation theory, achieving the optimal exponential rate for spectral clustering under general sub‑Gaussian mixture models. \citep{chen2021cutoff} derived a sharp cutoff for exact recovery using a semidefinite programming relaxation of k‑means, while \citep{ndaoud2022sharp} obtained the optimal threshold for the two‑component case with a hollowed Gram matrix approach. \citep{li2025exact} further established sharp exact recovery thresholds for Gaussian mixtures with entrywise heterogeneous noise and arbitrary community sizes. \citep{giraud2019partial} provided partial recovery bounds for the relaxed k‑means, and \citep{fei2018hidden} studied hidden integrality of SDP relaxations for sub‑Gaussian mixtures. More recently, \citep{chen2024achieving} extended the optimal clustering theory to anisotropic Gaussian mixtures where covariances differ across clusters, while \citep{srivastava2023robust,jana2025adversarially} developed robust clustering algorithms that achieve optimal mislabeling rates in the presence of adversarial outliers. Collectively, these works have greatly deepened our understanding of when and why clustering algorithms succeed under the GMM.

Despite these advances, nearly all existing work on Gaussian or sub‑Gaussian mixture models relies on a crucial homogeneity assumption: observations within the same cluster are presumed to share the same scale or intensity. That is, the noise level and signal magnitude are taken to be identical for all data points belonging to the same component. While mathematically convenient, this assumption is frequently violated in practice. In gene expression profiling, two cells of the same type may differ substantially in total RNA concentration due to biological variability or technical factors, yet their relative expression patterns remain similar. In image classification, the same object can appear with greatly varying pixel intensities depending on lighting, camera settings, or distance, while its underlying shape and texture are unchanged. In financial transaction data, customers of the same spending category may exhibit markedly different transaction volumes, yet their spending patterns are comparable after scaling. In social network analysis, users within the same community may have different activity levels, but their connection patterns are alike. These examples illustrate that individual heterogeneity, the fact that each observation may have its own intrinsic scale, is common and cannot be ignored.

A similar issue has been successfully addressed in the network analysis literature. The classical stochastic block model (SBM) \citep{holland1983stochastic} assumes that all vertices within a block have the same expected degree, which is unrealistic for most real networks where degree distributions are highly heterogeneous. To remedy this, \citep{karrer2011stochastic} introduced the degree-corrected stochastic block model (DC-SBM), which assigns a degree parameter to each vertex, allowing for arbitrary degree sequences while preserving the block structure. The DC-SBM has become a cornerstone of modern network analysis, enabling more faithful modeling of real-world graphs and leading to improved community detection methods \citep{qin2013regularized,lei2015consistency,SCORE,rohe2016co,gao2018community,wang2020spectral,ma2021determining,jing2022community,deng2023strong,zhang2023adjusted,jin2024mixed,qing2025community,agterberg2025joint,gudhmundsson2026detecting}. The success of DC-SBM suggests that explicitly modeling individual heterogeneity can dramatically improve performance without sacrificing theoretical tractability.

Motivated by these observations, we propose a new framework called the individual-heterogeneous sub-Gaussian mixture model (Ih-GMM). In this model, each observation is assigned its own scale parameter that multiplies the cluster center. The noise is allowed to be sub-Gaussian and may be heteroskedastic across observations. Our model captures the intrinsic intensity variability common in practice while remaining sufficiently structured for rigorous statistical analysis.

The main contributions of this paper are as follows.

\begin{itemize}
  \item \emph{A flexible mixture model with individual heterogeneity.} We introduce the Ih-GMM, which extends the classical Gaussian mixture model by incorporating an individual scale parameter for each observation. This modification allows observations within the same cluster to have different magnitudes. The noise is only required to be sub-Gaussian and may be heteroskedastic across observations.

\item \emph{An efficient spectral algorithm using a hollowed Gram matrix.} We propose a spectral clustering method called IhSC (Individual-heterogeneity Spectral Clustering). It operates on a hollowed Gram matrix and applies $k$-means to the row-normalized eigenvectors to estimate the latent cluster labels.

\item \emph{Exact recovery guarantees under mild conditions.} We prove that, under mild separation conditions on the cluster centers, the IhSC algorithm recovers the true cluster labels exactly with high probability. Our analysis allows the number of clusters to grow polynomially with the sample size and accommodates high-dimensional settings where the feature dimension exceeds the sample size significantly.

\item \emph{Numerical experiments.} We conduct extensive simulations on synthetic data and experiments on nine real datasets. The results demonstrate that IhSC consistently outperforms its competing methods.
\end{itemize}

The remainder of the paper is organized as follows. Section \ref{sec:model} formally introduces the Ih-GMM model. Section \ref{sec:alg} describes the spectral clustering algorithm, first in an idealized setting and then for the observed data. Section \ref{sec:er} presents the theoretical results on exact recovery, including a detailed comparison with prior work. Section \ref{sec:sim} reports extensive simulation studies, and Section \ref{secrealdata} evaluates the method on real data. Section \ref{sec:con} concludes this paper with  discussions of future directions.

\paragraph*{Notation}
For any vector \(\mathbf{v}\), \(\|\mathbf{v}\|\) denotes its Euclidean norm. For any matrix \(\mathbf{A}\), \(\|\mathbf{A}\|\) and \(\sigma_1(\mathbf{A})\) denote its spectral norm (largest singular value), and \(\|\mathbf{A}\|_F\) denotes its Frobenius norm. \(\langle \mathbf{A},\mathbf{B}\rangle = \operatorname{tr}(\mathbf{A}^\top\mathbf{B})\) is the Frobenius inner product. \(\mathbb{I}\{\cdot\}\) is the indicator function. For a positive integer \(m\), \([m] = \{1,2,\dots,m\}\). \(\operatorname{diag}\{a_1,\dots,a_m\}\) is the diagonal matrix with entries \(a_1,\dots,a_m\). The notation \(\mathbf{I}_m\) stands for the \(m\times m\) identity matrix. \(\mathbf{e}_i\) denotes the \(i\)-th standard basis vector of appropriate dimension.
\section{The Individual-heterogeneous sub-Gaussian Mixture Model}\label{sec:model}
In this section, we formally introduce our model Ih-GMM.  
Let \(\mathbf{X} = [\mathbf{x}_1, \ldots, \mathbf{x}_n] \in \mathbb{R}^{p \times n}\) be the observed data matrix, where \(p\) is the number of features and \(n\) is the sample size. Each column \(\mathbf{x}_i\) corresponds to the \(i\)th observation for $i\in[n]$.

We assume that the data are generated from a mixture of \(K\) latent classes.  
For each observation, the latent class label is denoted by \(\mathbf{z}_i \in \{1,\dots,K\}\), and the number of clusters \(K\) is known.  
For each class \(k \in [K]\), there is an unknown mean vector \(\boldsymbol{\theta}_k \in \mathbb{R}^p\). The collection \(\boldsymbol{\theta}_1,\boldsymbol{\theta}_2,\dots,\boldsymbol{\theta}_K\) forms the centers of the latent classes.  
To capture individual heterogeneity, we introduce an individual‑specific scale parameter \(\boldsymbol{\omega}_i > 0\) for each observation. This allows the magnitude to differ across observations even when they belong to the same latent class.

The individual-heterogeneous sub-Gaussian mixture model (Ih-GMM) is then defined as
\begin{align}
    \mathbf{x}_i = \boldsymbol{\omega}_i \boldsymbol{\theta}_{\mathbf{z}_i} + \boldsymbol{\varepsilon}_i, \qquad i = 1,2,\ldots,n, \label{eq:ihgmm}
\end{align}
where the noise vectors \(\boldsymbol{\varepsilon}_i\) are independent across \(i\), and for each \(i\), the entries of \(\boldsymbol{\varepsilon}_i\) are independent sub‑Gaussian random variables with mean zero and \(\|\boldsymbol{\varepsilon}_i\|_{\psi_2} \le \eta\).
Here, \(\|\cdot\|_{\psi_2}\) denotes the sub‑Gaussian norm, and \(\eta\) is a positive constant. A standard fact about sub‑Gaussian variables is that their variances are bounded by an absolute constant multiple of \(\eta^2\). Thus, \(\eta^2\) controls the noise level up to a constant factor, while the variances themselves may differ across observations.

We collect the class labels into a membership matrix \(\mathbf{Z} \in \{0,1\}^{n \times K}\) with \(\mathbf{Z}_{ik} = \mathbb{I}\{\mathbf{z}_i = k\}\).  
Let \(\boldsymbol{\Theta} = [\boldsymbol{\theta}_1,\ldots,\boldsymbol{\theta}_K] \in \mathbb{R}^{p \times K}\) be the matrix of class means, \(\boldsymbol{\Omega} = \operatorname{diag}(\boldsymbol{\omega}_1,\ldots,\boldsymbol{\omega}_n) \in \mathbb{R}^{n \times n}\) the diagonal matrix of heterogeneity parameters, and \(\mathbf{E} = [\boldsymbol{\varepsilon}_1,\ldots,\boldsymbol{\varepsilon}_n] \in \mathbb{R}^{p \times n}\) the noise matrix.  
Then model \eqref{eq:ihgmm} can be written compactly as \(\mathbf{X} = \boldsymbol{\Theta} \mathbf{Z}^\top \boldsymbol{\Omega} + \mathbf{E}\), and the signal part is \(\mathbf{P} = \mathbb{E}[\mathbf{X}] = \boldsymbol{\Theta} \mathbf{Z}^\top \boldsymbol{\Omega}\).

Our model generalizes classical Gaussian mixtures in a natural way.  
If all \(\boldsymbol{\omega}_i\) are equal, the model reduces to a sub‑Gaussian mixture with homoskedastic noise. If in addition the noise is Gaussian and shares a common variance, our Ih-GMM degenerates to the classical Gaussian mixture model.  
In many real applications, observations within the same latent class often exhibit different scales or intensities. Ignoring such individual heterogeneity can distort the clustering structure and lead to poor performance. By explicitly incorporating the scale parameters \(\boldsymbol{\omega}_i\), our model captures this extra layer of variation while remaining structurally simple. As we will show later, this flexibility does not come at the cost of computational difficulty: a simple spectral method still achieves exact recovery under mild conditions.

We assume that the observed data \(\mathbf{X}\) are generated from the Ih-GMM model described above. Given \(\mathbf{X}\) and the known number of clusters \(K\), our goal is to recover the true latent class label vector \(\mathbf{z}\) up to a permutation of labels. The performance of a clustering estimator \(\hat{\mathbf{z}}\) is measured by the misclassification loss (also known as Hamming distance)
\begin{align*}
    \ell(\hat{\mathbf{z}}, \mathbf{z}) = \min_{\phi \in \Phi} \sum_{i=1}^n \mathbb{I}_{\{\phi(\hat{\mathbf{z}}_i) \neq \mathbf{z}_i\}},
\end{align*}
where \(\Phi\) is the set of all permutations of \([K]\).  
Since the cluster labels are arbitrary, the estimator and the true labels can only be compared after aligning them optimally.  
This loss counts the number of misclassified observations under the best matching.

A key quantity that determines the difficulty of clustering is the minimum distance between any two cluster centers \citep{loffler2021optimality},
\begin{align*}
    \Delta = \min_{k,\ell\in[K]:k\neq\ell} \|\boldsymbol{\theta}_k - \boldsymbol{\theta}_\ell\|,
\end{align*}
with a larger \(\Delta\) indicating better separated clusters and thus an easier recovery task.

Another important quantity is the sizes of the clusters.  
Let \(n_k = |\{i: \mathbf{z}_i = k\}|\) be the size of the \(k\)-th cluster.  
Define
\[
\beta = \frac{\min_{k \in [K]} n_k}{n / K},
\]
so that \(\beta \in (0,1]\).  
When \(\beta\) is bounded away from zero, all clusters have comparable sizes; when \(\beta\) is small, some clusters can be much smaller than others, making the recovery more challenging.  
In this paper, we allow \(\beta\) to decay to zero as \(n\) increases, thereby accommodating highly unbalanced clusters.
\section{Spectral Clustering Algorithm}\label{sec:alg}
In this section, we develop an efficient spectral method to estimate the true latent class label vector \(\mathbf{z}\) under the proposed Ih-GMM model.  
To illustrate the core idea, we begin with an ideal scenario where the signal matrix \(\mathbf{P} = \mathbb{E}[\mathbf{X}]\) is known.  
Afterwards, we show how the same principle applies to the observed data matrix \(\mathbf{X}\) by constructing a suitable matrix whose eigenvectors mimic those of \(\mathbf{P}\).

\subsection{Ideal Case}
Recall that \(\mathbf{P} = \boldsymbol{\Theta} \mathbf{Z}^\top \boldsymbol{\Omega}\) and that the number of clusters \(K\) is known.  
Define \(\tilde{n}_k = \sum_{i: \mathbf{z}_i = k} \boldsymbol{\omega}_i^2\) and \(\mathbf{D}_{\boldsymbol{\Omega}} = \operatorname{diag}(\tilde{n}_1,\dots,\tilde{n}_K)\).  
The following lemma reveals the structure of the right singular vectors of \(\mathbf{P}\).

\begin{lem}\label{lem:Vstructure}
There exists an orthogonal matrix \(\mathbf{H} \in \mathbb{R}^{K \times K}\) such that the right singular vectors of \(\mathbf{P}\) satisfy
\[
\mathbf{U} = \boldsymbol{\Omega} \mathbf{Z} \mathbf{D}_{\boldsymbol{\Omega}}^{-1/2} \mathbf{H}.
\]

Consequently, for any \(i\) with \(\mathbf{z}_i = k\),
\[
\mathbf{U}_{i,:} = \boldsymbol{\omega}_i \cdot \frac{1}{\sqrt{\tilde{n}_k}} \mathbf{H}_{k,:}.
\]

Define the row‑normalized matrix \(\mathbf{U}_*\) by \((\mathbf{U}_*)_{i,:} = \mathbf{U}_{i,:} / \|\mathbf{U}_{i,:}\|\). Then
\[
\mathbf{U}_* = \mathbf{Z} \mathbf{H},
\]
so rows belonging to the same cluster are identical, and rows from different clusters are orthogonal unit vectors.  
In particular, the Euclidean distance between two rows from distinct clusters is \(\sqrt{2}\).
\end{lem}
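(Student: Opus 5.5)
We factor $\mathbf{P}$ so that its row space is visibly spanned by a matrix with orthonormal columns. Write
\[
\mathbf{P} = \boldsymbol{\Theta}\mathbf{Z}^\top\boldsymbol{\Omega} = \bigl(\boldsymbol{\Theta}\mathbf{D}_{\boldsymbol{\Omega}}^{1/2}\bigr)\bigl(\boldsymbol{\Omega}\mathbf{Z}\mathbf{D}_{\boldsymbol{\Omega}}^{-1/2}\bigr)^\top,
\]
and set $\mathbf{B} = \boldsymbol{\Omega}\mathbf{Z}\mathbf{D}_{\boldsymbol{\Omega}}^{-1/2}\in\mathbb{R}^{n\times K}$. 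First we check that $\mathbf{B}$ has orthonormal columns. Since each row of $\mathbf{Z}$ has a single nonzero entry, $(\mathbf{Z}^\top\boldsymbol{\Omega}^2\mathbf{Z})_{k\ell} = \sum_i \boldsymbol{\omega}_i^2 \mathbf{Z}_{ik}\mathbf{Z}_{i\ell}$, which equals $\tilde n_k$ if $k=\ell$ and $0$ otherwise. Hence $\mathbf{B}^\top\mathbf{B} = \mathbf{D}_{\boldsymbol{\Omega}}^{-1/2}\mathbf{D}_{\boldsymbol{\Omega}}\mathbf{D}_{\boldsymbol{\Omega}}^{-1/2} = \mathbf{I}_K$. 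This requires every $\tilde n_k>0$, which holds because $\boldsymbol{\omega}_i>0$ and each cluster is nonempty.

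Next we take a compact SVD of the $p\times K$ matrix $\boldsymbol{\Theta}\mathbf{D}_{\boldsymbol{\Omega}}^{1/2} = \mathbf{L}\boldsymbol{\Sigma}\mathbf{H}^\top$. Here we need the implicit assumption that $\boldsymbol{\Theta}$ has rank $K$, so that $\mathbf{P}$ has rank $K$ and the $K$ right singular vectors are well defined. Under this assumption $\mathbf{H}$ is a $K\times K$ orthogonal matrix. Then
\[
\mathbf{P} = \mathbf{L}\boldsymbol{\Sigma}(\mathbf{B}\mathbf{H})^\top ,
\]
and $\mathbf{B}\mathbf{H}$ has orthonormal columns because $(\mathbf{B}\mathbf{H})^\top\mathbf{B}\mathbf{H} = \mathbf{H}^\top\mathbf{H} = \mathbf{I}_K$. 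This is therefore a valid compact SVD, and $\mathbf{U} = \mathbf{B}\mathbf{H}$ is a matrix of right singular vectors.

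This identification is the step that needs the most care, since singular vectors are not unique. The way around it is that any other orthonormal basis $\mathbf{U}'$ of the row space of $\mathbf{P}$ (the column space of $\mathbf{B}$) satisfies $\mathbf{U}' = \mathbf{B}\mathbf{B}^\top\mathbf{U}' = \mathbf{B}(\mathbf{B}^\top\mathbf{U}')$. The matrix $\mathbf{B}^\top\mathbf{U}'$ is $K\times K$ and orthogonal, so the stated form holds for whichever $\mathbf{U}$ is chosen, with $\mathbf{H}$ replaced by this orthogonal matrix.

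The remaining claims are direct computations. Row $i$ of $\boldsymbol{\Omega}\mathbf{Z}\mathbf{D}_{\boldsymbol{\Omega}}^{-1/2}$ is $\boldsymbol{\omega}_i\tilde n_k^{-1/2}\mathbf{e}_k^\top$ when $\mathbf{z}_i=k$. Multiplying by $\mathbf{H}$ gives $\mathbf{U}_{i,:} = \boldsymbol{\omega}_i\tilde n_k^{-1/2}\mathbf{H}_{k,:}$.

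Since $\mathbf{H}$ is orthogonal, $\|\mathbf{H}_{k,:}\|=1$, so $\|\mathbf{U}_{i,:}\| = \boldsymbol{\omega}_i/\sqrt{\tilde n_k}>0$. Normalizing gives $(\mathbf{U}_*)_{i,:} = \mathbf{H}_{\mathbf{z}_i,:}$, that is, $\mathbf{U}_* = \mathbf{Z}\mathbf{H}$. Rows of $\mathbf{H}$ are orthonormal, so rows of $\mathbf{U}_*$ from the same cluster coincide and rows from different clusters $k\neq\ell$ are orthogonal unit vectors. Their distance is then
\[
\sqrt{\|\mathbf{H}_{k,:}\|^2+\|\mathbf{H}_{\ell,:}\|^2} = \sqrt{2}.
\]
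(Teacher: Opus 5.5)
Your proposal is correct and follows the paper's proof essentially step for step: the same factorization $\mathbf{P} = (\boldsymbol{\Theta}\mathbf{D}_{\boldsymbol{\Omega}}^{1/2})(\boldsymbol{\Omega}\mathbf{Z}\mathbf{D}_{\boldsymbol{\Omega}}^{-1/2})^\top$, an SVD of $\boldsymbol{\Theta}\mathbf{D}_{\boldsymbol{\Omega}}^{1/2}$ to produce $\mathbf{H}$, the check that $\boldsymbol{\Omega}\mathbf{Z}\mathbf{D}_{\boldsymbol{\Omega}}^{-1/2}\mathbf{H}$ has orthonormal columns, and then the row-wise computations. Your additional remarks are correct refinements of points the paper leaves implicit: that $\boldsymbol{\Theta}$ must have rank $K$, that every $\tilde n_k>0$, and that any other choice $\mathbf{U}'$ of right singular vectors equals $\mathbf{B}(\mathbf{B}^\top\mathbf{U}')$ with $\mathbf{B}^\top\mathbf{U}'$ orthogonal.
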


This lemma provides the theoretical foundation for the spectral clustering method developed in this paper. By this lemma, in the ideal case we can recover the true labels by the following steps:
\begin{itemize}
  \item Compute $\mathbf{U}$, the right singular vectors of \(\mathbf{P}\).
  \item Form the row‑normalized matrix \(\mathbf{U}_*\) by setting \((\mathbf{U}_*)_{i,:} = \mathbf{U}_{i,:} / \|\mathbf{U}_{i,:}\|\) for \(i = 1,2,\dots,n\).
  \item Apply \(K\)-means to the rows of \(\mathbf{U}_*\).
\end{itemize}

Because rows from different clusters are exactly \(\sqrt{2}\) apart, the \(K\)-means algorithm will separate them perfectly, yielding the true labels up to a permutation.

\subsection{Real Case}
In practice, we do not observe \(\mathbf{P}\) but only the noisy data matrix \(\mathbf{X}\). To adapt the ideal procedure, we draw inspiration from the idea of using a hollowed Gram matrix, which was introduced by \citep{ndaoud2022sharp} in the context of two-component Gaussian mixture models to remove the bias introduced by the noise. Specifically, we construct the matrix \(\mathbf{G} = \mathcal{P}_{\mathrm{off-diag}}(\mathbf{X}^{\top}\mathbf{X})\), where \(\mathcal{P}_{\mathrm{off-diag}}\) sets all diagonal entries to zero. The leading eigenvectors of \(\mathbf{G}\) provide an estimate of the right singular subspace of \(\mathbf{P}\). Based on this construction, we propose the following simple spectral algorithm, termed Individual-heterogeneity Spectral Clustering (IhSC). This extends the hollowed Gram approach to the more general setting with \(K \ge 2\) clusters, individual heterogeneity parameters \(\omega_i\), and sub-Gaussian noise that may be heteroskedastic across observations.
\begin{algorithm}[!ht]
\caption{\textbf{I}ndividual-\textbf{h}eterogeneity \textbf{S}pectral \textbf{C}lustering (\textbf{IhSC})}\label{alg:IhSC}
\begin{algorithmic}[1]
\Require Data matrix \(\mathbf{X}\in\mathbb{R}^{p\times n}\), number of clusters \(K\).
\Ensure Cluster labels \(\hat{\mathbf{z}}\in[K]^n\).
\State Compute \(\mathbf{G} = \mathcal{P}_{\mathrm{off-diag}}(\mathbf{X}^\top\mathbf{X})\) where \(\mathcal{P}_{\mathrm{off-diag}}\) sets all diagonal entries to zero.
\State Perform the rank-\(K\) eigen‑decomposition \(\mathbf{G} = \hat{\mathbf{U}} \hat{\mathbf{\Lambda}} \hat{\mathbf{U}}^\top\) with \(\hat{\mathbf{U}}\in\mathbb{R}^{n\times K}\) having orthonormal columns.
\State Row‑normalize \(\hat{\mathbf{U}}\) to obtain \(\hat{\mathbf{U}}_*\) by \((\hat{\mathbf{U}}_*)_{i,:} = \hat{\mathbf{U}}_{i,:}/\|\hat{\mathbf{U}}_{i,:}\|_2\) for \(i=1,2,\dots,n\).
\State Run \(K\)-means on the rows of \(\hat{\mathbf{U}}_*\) and return the resulting labels \(\hat{\mathbf{z}}\).
\end{algorithmic}
\end{algorithm}

Algorithm~\ref{alg:IhSC} is straightforward to implement and requires no iterative optimization.  
Its computational cost is dominated by three steps: forming the Gram matrix \(\mathbf{X}^\top\mathbf{X}\) costs \(O(n^2 p)\); computing the leading \(K\) eigenvectors of \(\mathbf{G}\) costs \(O(n^2 K)\); and the final \(K\)-means step costs \(O(n K^2)\) per iteration, which is negligible compared to the other terms when \(K \ll n\).  
Since \(K\) is much smaller than \(n\) and \(p\) in our setting, the overall complexity is \(O(pn^2) \). In the next section, we will show that, under mild conditions, this simple procedure achieves exact recovery of the true cluster labels with high probability.
\section{Exact Recovery}\label{sec:er}
In this section we prove that the IhSC algorithm introduced in Section~\ref{sec:alg} recovers the true cluster labels exactly with high probability, provided that the signal is sufficiently strong relative to the noise.  
We begin by introducing several quantities that capture the difficulty of the problem.  
The overall level of the noise is controlled by the constant \(\eta\) appearing in the sub‑Gaussian condition on \(\boldsymbol{\varepsilon}_i\).  
The size of the smallest cluster relative to the average is measured by \(\beta\), which can be as small as \(o(1)\) to allow highly unbalanced configurations.  
The degree of cluster imbalance is further captured by \(\tau = n_{\max} / n_{\min}\), the ratio between the largest and the smallest cluster, where \(n_{\min}=\min_{k\in[K]}n_{k}, n_{\max}=\max_{k\in[K]}n_{k}\).  
A large \(\tau\) means that some clusters are much larger than others, which may complicate the recovery task.  
The condition number \(\kappa = \sigma_1(\boldsymbol{\Theta}) / \sigma_K(\boldsymbol{\Theta})\) of the mean matrix \(\boldsymbol{\Theta}\) quantifies how well the cluster centers are spread out; a larger \(\kappa\) indicates a more challenging configuration. For notational convenience, we also set  
\(\boldsymbol{\omega}_{\min}= \min_{i\in[n]}\boldsymbol{\omega}_i\), \(\boldsymbol{\omega}_{\max}= \max_{i\in[n]}\boldsymbol{\omega}_i\), and 
\(d = \max\{n,p\}\).

A key structural property of the signal matrix \(\mathbf{P}\) is encoded in the following incoherence parameters.

\begin{defin}
Define the incoherence parameters \(\mu_0,\mu_1,\mu_2\) of \(\mathbf{P}\) as
\[
\mu_0 \coloneqq \frac{p n \max_{j\in[p],\,i\in[n]} |\mathbf{P}_{j,i}|^2}{\|\mathbf{P}\|_{\mathrm{F}}^2},\qquad
\mu_1 \coloneqq \frac{n}{K} \max_{i\in[n]} \|\mathbf{U}^{\top}\mathbf{e}_i\|_2^2,\qquad
\mu_2 \coloneqq \frac{p}{K} \max_{j\in[p]} \|\mathbf{V}^{\top}\mathbf{e}_j\|_2^2,
\]
where \(\mathbf{P}= \mathbf{V}\boldsymbol{\Sigma}\mathbf{U}^{\top}\) is the compact singular value decomposition with \(\mathbf{V}\in\mathbb{R}^{p\times K}\) and \(\mathbf{U}\in\mathbb{R}^{n\times K}\). Set \(\mu := \max\{\mu_0,\mu_1,\mu_2\}\).
\end{defin}

These parameters measure how well the singular vectors are spread across coordinates.  
Smaller values of \(\mu\) indicate that the singular vectors are more incoherent, which is a favorable condition for spectral methods.

The following theorem gives sufficient conditions for exact recovery in terms of all the relevant model parameters.  
The conditions appear somewhat involved, but this is precisely because we track the explicit influence of every parameter without imposing hidden restrictions.  
In particular, the theorem does not require any additional assumptions beyond the natural bounds that already define the model.

\begin{thm}\label{thm:exact}
Let $\hat{\mathbf{z}}$ be the output of Algorithm~\ref{alg:IhSC}. Suppose that the sample sizes satisfy
\begin{align}\label{MainNPKCond}
np&\gg\frac{\mu^2\tau^4\kappa^8\boldsymbol{\omega}^8_{\max}}{\boldsymbol{\omega}^8_{\min}}K^2\log^4 d,~~~p\gg\frac{\mu\tau^4\kappa^8\boldsymbol{\omega}^8_{\max}}{\boldsymbol{\omega}^8_{\min}}K\log^2 d,~~~ n\gg\frac{\tau^{\frac{4}{3}}\kappa^{\frac{8}{3}}\boldsymbol{\omega}^{\frac{14}{3}}_{\max}\mu^{\frac{1}{3}}}{\boldsymbol{\omega}^{\frac{14}{3}}_{\min}\beta^{\frac{2}{3}}}Kn^{\frac{1}{3}}_{\max},
\end{align}
and the separation conditions
\begin{align}\label{MainDeltaCond}
  \Delta\gg \frac{\kappa^2\eta\boldsymbol{\omega}^{\frac{3}{2}}_{\max}\tau^{\frac{1}{2}}\mu^{\frac{1}{4}}}{\boldsymbol{\omega}^{\frac{5}{2}}_{\min}\beta^{\frac{1}{2}}}(\frac{Kpn_{\max}}{n^2})^{\frac{1}{4}}\sqrt{K\log d},~~~\Delta\gg \frac{\kappa^4\eta\boldsymbol{\omega}^4_{\max}\sqrt{\tau^3\mu}}{\boldsymbol{\omega}_{\min}^5\beta^{\frac{1}{2}}}K\sqrt{\frac{n_{\max}\log d}{n}}
\end{align}
hold, then with probability at least $1 - O(d^{-10})$ (for some absolute constant $c_0>0$), we have $\ell(\hat{z},z)=0$, i.e., the algorithm exactly recovers the true cluster labels up to a permutation. 
\end{thm}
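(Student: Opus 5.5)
The plan is an entrywise ($\ell_{2,\infty}$) eigenvector perturbation argument for the hollowed Gram matrix, followed by a row-normalization step and a deterministic $K$-means argument. Write $\mathbf{G}=\mathbf{P}^\top\mathbf{P}+\mathbf{W}$. Here $\mathbf{P}^\top\mathbf{P}=\mathbf{U}\boldsymbol{\Sigma}^2\mathbf{U}^\top$, and by Lemma~\ref{lem:Vstructure} its eigenvectors are exactly $\mathbf{U}=\boldsymbol{\Omega}\mathbf{Z}\mathbf{D}_{\boldsymbol{\Omega}}^{-1/2}\mathbf{H}$. The perturbation is
\[
\mathbf{W}=\mathcal{P}_{\mathrm{off-diag}}(\mathbf{P}^\top\mathbf{E}+\mathbf{E}^\top\mathbf{P}+\mathbf{E}^\top\mathbf{E})-\mathcal{P}_{\mathrm{diag}}(\mathbf{P}^\top\mathbf{P}).
\]
I would invoke a leave-one-out $\ell_{2,\infty}$ bound for hollowed Gram matrices, in the style of the heteroskedastic PCA analyses (Cai et al.; Zhang, Cai, Wu). It gives, with probability $1-O(d^{-10})$,
\[
\min_{\mathbf{O}}\|\hat{\mathbf{U}}-\mathbf{U}\mathbf{O}\|_{2,\infty}\lesssim \frac{\kappa^2\mu^{1/2}\sqrt{K}}{\sqrt n}\Big(\frac{\eta^2\sqrt{np}\log d+\eta\sigma_1(\mathbf{P})\sqrt{n\log d}}{\sigma_K^2(\mathbf{P})}+\frac{\max_i\|\mathbf{P}_{:,i}\|^2}{\sigma_K^2(\mathbf{P})}\Big),
\]
valid once the spectral error $\|\mathbf{W}\|$ is $o(\sigma_K^2(\mathbf{P}))$. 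The sample-size conditions \eqref{MainNPKCond} would be used exactly to guarantee this eigengap condition and the incoherence-driven bounds on the diagonal-deletion bias. The first key step is to bound $\|\mathbf{W}\|$: the term $\mathbf{E}^\top\mathbf{E}$ off-diagonal is handled by a decoupling or matrix-Bernstein argument, giving roughly $\eta^2(\sqrt{np}+n)\log d$, and the cross terms contribute $\eta\|\mathbf{P}\|\sqrt{\log d}$.

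The second step translates the parameters into model quantities. Since $\mathbf{P}^\top\mathbf{P}=\boldsymbol{\Omega}\mathbf{Z}\boldsymbol{\Theta}^\top\boldsymbol{\Theta}\mathbf{Z}^\top\boldsymbol{\Omega}$, I would bound
\[
\sigma_K^2(\mathbf{P})\ge \sigma_K^2(\boldsymbol{\Theta})\,\boldsymbol{\omega}_{\min}^2 n_{\min},\qquad \sigma_1^2(\mathbf{P})\le\sigma_1^2(\boldsymbol{\Theta})\,\boldsymbol{\omega}_{\max}^2 n_{\max},
\]
which is where the factors $\tau$ and $\beta$ enter. For the separation, $\Delta\le\sqrt2\,\sigma_1(\boldsymbol{\Theta})$ and $\sigma_K(\boldsymbol{\Theta})\ge\sigma_1(\boldsymbol{\Theta})/\kappa\gtrsim\Delta/\kappa$. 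This is why $\Delta$ appears through $\kappa$-powers in \eqref{MainDeltaCond}: I would lower bound $\sigma_K^2(\mathbf{P})$ by $\Delta^2\boldsymbol{\omega}_{\min}^2 n_{\min}/\kappa^2$ and insert this into the previous bound.

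The third step handles row normalization. From Lemma~\ref{lem:Vstructure}, $\|\mathbf{U}_{i,:}\|=\boldsymbol{\omega}_i/\sqrt{\tilde n_{\mathbf{z}_i}}\ge \boldsymbol{\omega}_{\min}/(\boldsymbol{\omega}_{\max}\sqrt{n_{\max}})$. Using the elementary inequality $\|a/\|a\|-b/\|b\|\|\le 2\|a-b\|/\|b\|$, I get
\[
\max_i\|(\hat{\mathbf{U}}_*)_{i,:}-(\mathbf{U}_*\mathbf{O})_{i,:}\|\le 2\sqrt{n_{\max}}\,\frac{\boldsymbol{\omega}_{\max}}{\boldsymbol{\omega}_{\min}}\,\|\hat{\mathbf{U}}-\mathbf{U}\mathbf{O}\|_{2,\infty}.
\]
The conditions \eqref{MainDeltaCond} are designed precisely so that this quantity is $o(1)$, say below $\sqrt2/10$; the two displayed $\Delta$ conditions correspond to the $\eta^2$ term and the $\eta\sigma_1$ term respectively. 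Finally, since the rows of $\mathbf{U}_*\mathbf{O}=\mathbf{Z}\mathbf{H}\mathbf{O}$ take $K$ distinct values pairwise $\sqrt2$ apart, a standard deterministic argument finishes the proof. The $K$-means optimum has cost at most $n\epsilon^2$, so each true center has an estimated center within a small radius; otherwise a whole cluster of size at least $n_{\min}$ would incur cost exceeding the bound, and this is where $\beta$ enters again. Each row is therefore assigned to the center of its own cluster, and $\ell(\hat{\mathbf{z}},\mathbf{z})=0$.

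The main obstacle is the first step, the $\ell_{2,\infty}$ bound under heteroskedastic noise with the diagonal deleted. The leave-one-out construction must decouple row $i$ of $\mathbf{E}^\top\mathbf{E}$ from $\hat{\mathbf{U}}$, and the bias $\mathcal{P}_{\mathrm{diag}}(\mathbf{P}^\top\mathbf{P})$, of size $\boldsymbol{\omega}_{\max}^2\max_k\|\boldsymbol{\theta}_k\|^2\le \mu\,\|\mathbf{P}\|_F^2/n$, must be controlled via $\mu_0$. My first attempt would be to cite an existing theorem of this type, for example Cai et al.\ (2021) on the hollowed Gram matrix with $\mu$-incoherence, rather than rederive it. I would then check that its hypotheses reduce to \eqref{MainNPKCond}, in particular the odd $n^{1/3}_{\max}$ condition, which likely arises from balancing the diagonal-bias term against $\sigma_K^2(\mathbf{P})$.
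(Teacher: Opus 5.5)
Up to and including the row-normalization step, your plan is the paper's proof. The paper applies Theorem~1 of \citep{cai2021subspace} to $\mathbf{X}^\top$ with full observation, checking the noise-truncation assumption at level $\eta\sqrt{24\log d}$ and condition (15) via the sample-size and separation conditions. It then inserts $\sigma_K(\mathbf{P})\ge\frac{\boldsymbol{\omega}_{\min}\Delta}{2\kappa}\sqrt{\beta n/K}$, $\kappa_{\mathbf{P}}\le\kappa\sqrt{\tau}\,\boldsymbol{\omega}_{\max}/\boldsymbol{\omega}_{\min}$ and $\mu_1\le\boldsymbol{\omega}_{\max}^2/(\boldsymbol{\omega}_{\min}^2\beta)$, and uses your normalization inequality. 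The three resulting error terms are made $o(1)$ by, respectively, the $n_{\max}^{1/3}$ condition (the diagonal-deletion bias term after multiplying by the normalization factor) and the two separation conditions, as you anticipated. The gap is in your last step.

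Let $\epsilon$ be the maximal row error after normalization. In your cost argument the $K$-means optimum costs at most $n\epsilon^2$. To exclude a true center with no fitted center within radius $r$, you need $n_{\min}(r-\epsilon)^2>n\epsilon^2$, i.e.\ $\epsilon<r/(1+\sqrt{K/\beta})$. Combined with $r+\epsilon<\sqrt{2}/2$ for the assignment step, you need $\epsilon\lesssim\sqrt{\beta/K}$, so $\epsilon\le\sqrt{2}/10$ does not suffice once $K/\beta\to\infty$. The hypotheses only deliver $\epsilon=o(1)$. For example, the second separation condition makes the term $\frac{\kappa^4\eta\boldsymbol{\omega}^4_{\max}\sqrt{\tau^3\mu}}{\boldsymbol{\omega}_{\min}^5\sqrt{\beta}}\cdot\frac{K\sqrt{n_{\max}\log d}}{\Delta\sqrt{n}}$ vanish, but not at rate $\sqrt{\beta/K}$.

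The requirement is genuine, not an artifact of your argument. Take $K=2$: one cluster is a single row equal to $\mathbf{c}_1$, and the other has $N$ rows, half at each of two points placed symmetrically at distance $\epsilon$ from $\mathbf{c}_2$. Every row is much closer to its own center than to the other. Yet the true partition costs about $N\epsilon^2$, while splitting the large cluster into its halves and merging the singleton into one of them costs less than $4$. So for $N\gg\epsilon^{-2}$ the $K$-means optimum is wrong; here $\beta\approx 2/N$, i.e.\ exactly the regime $\epsilon\gg\sqrt{\beta/K}$.

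The paper's own proof ends by observing that each row is nearer its own true center than any other and concluding that $K$-means is exact. This example refutes that inference, so there is no fix to borrow from it. To close the argument you have two options:
\begin{itemize}
\item Propagate the factor $\sqrt{K/\beta}$ into the requirement on $\epsilon$, which strengthens the sample-size and separation conditions accordingly. When $K/\beta=O(1)$ your argument already goes through as written.
\item Replace $K$-means in Algorithm~\ref{alg:IhSC} by a rule that uses the $\sqrt{2}$ gap directly, e.g.\ grouping rows at mutual distance below $\sqrt{2}/2$. This is exact whenever $\epsilon<\sqrt{2}/4$.
\end{itemize}
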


The conditions in Theorem~\ref{thm:exact} reflect the intrinsic difficulty of the problem. A larger ratio \(\boldsymbol{\omega}_{\max}/\boldsymbol{\omega}_{\min}\) (indicating greater variability in the individual scales) or a smaller \(\beta\) (i.e., more severe cluster imbalance) makes the recovery harder, requiring a larger separation \(\Delta\) or larger sample sizes.  
Similarly, a larger \(\tau\) (higher imbalance), a larger \(K\) (more latent classes), a larger \(\eta\) (stronger noise), or a larger \(\kappa\) (poorer conditioning of the cluster centers) also increase the required \(\Delta\). Because the theorem keeps all parameters explicit, it reveals exactly how each factor contributes to the overall signal‑to‑noise requirement.

In many practical situations, the cluster sizes are roughly balanced, the individual scaling parameters \(\boldsymbol{\omega}_i\) are of constant order, and the noise level is fixed.  
The following corollary shows that under these natural simplifications, the conditions in Theorem \ref{thm:exact} reduce to a clean and interpretable form.

\begin{cor}\label{cor:balanced}
Assume that the cluster sizes are balanced, meaning \(\tau = O(1)\), that \(\boldsymbol{\omega}_i = O(1)\) for all \(i\), and that \(\eta,\kappa,\mu\) are all \(O(1)\).  
Then, under the same algorithm and with the same high‑probability guarantee as in Theorem~\ref{thm:exact}, exact recovery holds provided
\begin{align}
np &\gg K^2\log^4 d,\qquad p \gg K\log^2 d,\qquad n \gg K, \label{cond:cor}
\end{align}
and the separation condition
\begin{align}
\Delta &\gg \sqrt{K\log d}\;\max\Bigl\{1,\; \bigl(\tfrac{p}{n}\bigr)^{\frac{1}{4}}\Bigr\} \label{cond:delta}
\end{align}
is satisfied.
\end{cor}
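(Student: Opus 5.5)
This follows directly from Theorem~\ref{thm:exact}. I substitute the simplifying assumptions into \eqref{MainNPKCond} and \eqref{MainDeltaCond} and check that each resulting requirement is implied by \eqref{cond:cor} and \eqref{cond:delta}.

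One point needs care. "$\boldsymbol{\omega}_i=O(1)$" must be read as $\boldsymbol{\omega}_i\asymp 1$, so that the ratio $\boldsymbol{\omega}_{\max}/\boldsymbol{\omega}_{\min}=O(1)$. Otherwise the factors $\boldsymbol{\omega}_{\min}^{-5/2}$ and $\boldsymbol{\omega}_{\min}^{-5}$ in \eqref{MainDeltaCond} are not controlled. I adopt this reading, together with $\eta,\kappa,\mu,\tau=O(1)$. Since $\tau=O(1)$, we have $n_{\max}\asymp n_{\min}\asymp n/K$, hence $\beta=Kn_{\min}/n\asymp 1$.

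\textbf{Sample-size conditions.} With all constants absorbed, the first two conditions of \eqref{MainNPKCond} become $np\gg K^2\log^4 d$ and $p\gg K\log^2 d$, exactly as stated. For the third, substitute $n_{\max}\asymp n/K$ and $\beta\asymp 1$. The condition becomes $n\gg K (n/K)^{1/3}=K^{2/3}n^{1/3}$. This is equivalent to $n^{2/3}\gg K^{2/3}$, i.e.\ $n\gg K$.

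\textbf{Separation conditions.} In the first condition of \eqref{MainDeltaCond}, $Kpn_{\max}/n^2\asymp p/n$. The requirement therefore reads $\Delta\gg (p/n)^{1/4}\sqrt{K\log d}$. In the second condition, $n_{\max}/n\asymp 1/K$, so the right side is of order $K\sqrt{\log d/K}=\sqrt{K\log d}$. Taking the maximum of the two gives \eqref{cond:delta}.

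All hypotheses of Theorem~\ref{thm:exact} then hold, so $\ell(\hat{\mathbf z},\mathbf z)=0$ with probability $1-O(d^{-10})$. The only real obstacle is the interpretive issue about the lower bound on $\boldsymbol{\omega}_i$; the rest is bookkeeping of exponents, where in particular the third sample-size condition must be carefully reduced to $n\gg K$.
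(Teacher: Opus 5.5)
Your proposal is correct and follows the paper's proof, which likewise just substitutes $\tau,\beta,Kn_{\max}/n,\eta,\mu,\kappa=O(1)$ into \eqref{MainNPKCond} and \eqref{MainDeltaCond} and reads off the resulting conditions. Two of your steps supply bounds that the paper's one-line argument uses only implicitly: reading $\boldsymbol{\omega}_i=O(1)$ as $\boldsymbol{\omega}_i\asymp 1$, and noting that $\tau=O(1)$ forces $\beta\asymp 1$, whereas the paper writes only $\beta=O(1)$, which holds trivially since $\beta\le 1$.
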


Corollary~\ref{cor:balanced} shows that when the model parameters are well behaved, exact recovery is guaranteed as long as \(\Delta\) is sufficiently large relative to the quantity \(\sqrt{K\log d}\,\max\{1,(p/n)^{1/4}\}\). In the classical low‑dimensional regime where \(p \ll n\) and \(K = O(1)\), this condition reduces to \(\Delta \gg \sqrt{\log n}\), which matches the optimal threshold known for Gaussian mixtures \citep{loffler2021optimality,ndaoud2022sharp}. In the high‑dimensional regime \(p \gg n\), the condition becomes \(\Delta \gg (p/n)^{1/4}\sqrt{K\log p}\). We will compare this rate with existing exact recovery results for the classical GMMs in the next subsection.

To ensure that the incoherence parameter \(\mu\) is indeed \(O(1)\) in Corollary \ref{cor:balanced}, we impose the following mild assumptions on the mean matrix \(\boldsymbol{\Theta}\).

\begin{assum}\label{ass:entrynorm}
There exist constants \(C_{\boldsymbol{\Theta}}>0\) and \(c>0\) such that \(\max_{1\le j\le p,\,1\le k\le K}|\boldsymbol{\Theta}_{j,k}| \le C_{\boldsymbol{\Theta}}\) and \(\min_{k\in[K]}\|\boldsymbol{\theta}_k\| \ge c\sqrt{p}.\)
\end{assum}

\begin{assum}\label{ass:subspace}
Let \(\mathcal{S} = \operatorname{col}(\boldsymbol{\Theta})\subseteq\mathbb{R}^p\) be the column space of \(\boldsymbol{\Theta}\) and assume \(\dim(\mathcal{S}) = K\) (i.e., \(\boldsymbol{\Theta}\) has full column rank). Denote by \(\mathcal{P}_{\mathcal{S}}\) the orthogonal projection onto \(\mathcal{S}\). There exists a constant \(C_{\mathrm{inc}}\ge 1\) such that
\[
\max_{j\in[p]}\|\mathcal{P}_{\mathcal{S}}\mathbf{e}_j\|_2^2 \le \frac{C_{\mathrm{inc}}K}{p}.
\]
\end{assum}

The two assumptions stated above serve specific technical purposes in the subsequent analysis. Assumption~\ref{ass:entrynorm} imposes two conditions on the mean vectors. First, the entries of \(\boldsymbol{\Theta}\) are uniformly bounded, which is a mild regularity condition. Second, each cluster center is required to have a Euclidean norm that grows at least on the order of \(\sqrt{p}\). This scaling ensures that the overall signal does not vanish as the dimension increases, so that the signal strength does not become negligible relative to the noise. It is important to note that this scaling is a sufficient condition for our theoretical derivations, not a necessity for consistent recovery in general. Assumption~\ref{ass:subspace} concerns the geometric structure of the column space of \(\boldsymbol{\Theta}\). It requires that the projection of any standard basis vector onto this column space be sufficiently small—specifically, of order \(K/p\). This incoherence condition is standard in low‑rank matrix recovery and spectral clustering. It prevents the singular vectors from being too concentrated on a few coordinates. In practice, this condition holds, for example, when the singular vectors of \(\boldsymbol{\Theta}\) are sufficiently delocalized across the features.

\begin{lem}\label{lem:incoherence}
Under Assumptions~\ref{ass:entrynorm} and~\ref{ass:subspace}, we have
\[
\mu_0 \le \frac{\boldsymbol{\omega}_{\max}^2 C_{\boldsymbol{\Theta}}^2}{\boldsymbol{\omega}_{\min}^2\beta c^2},\qquad
\mu_2 \le C_{\mathrm{inc}}.
\]
\end{lem}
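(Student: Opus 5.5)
The two bounds are independent, so I will prove them separately, starting with the easier one.

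\emph{Bound on \(\mu_2\).} Since \(\mathbf{P}=\boldsymbol{\Theta}\mathbf{Z}^\top\boldsymbol{\Omega}\) and \(\mathbf{Z}^\top\boldsymbol{\Omega}\in\mathbb{R}^{K\times n}\) has rank \(K\) (every cluster is nonempty and \(\boldsymbol{\omega}_i>0\)), the matrix \(\mathbf{P}\) has rank \(K\). Moreover, because \(\boldsymbol{\Theta}\) has full column rank by Assumption~\ref{ass:subspace}, the column space of \(\mathbf{P}\) equals \(\mathcal{S}=\operatorname{col}(\boldsymbol{\Theta})\). Hence the left singular matrix \(\mathbf{V}\) is an orthonormal basis of \(\mathcal{S}\), and \(\mathbf{V}\mathbf{V}^\top=\mathcal{P}_{\mathcal{S}}\). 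It follows that
\[
\|\mathbf{V}^\top\mathbf{e}_j\|_2^2=\mathbf{e}_j^\top\mathbf{V}\mathbf{V}^\top\mathbf{e}_j=\|\mathcal{P}_{\mathcal{S}}\mathbf{e}_j\|_2^2\le \frac{C_{\mathrm{inc}}K}{p}.
\]
Multiplying by \(p/K\) and maximizing over \(j\) gives \(\mu_2\le C_{\mathrm{inc}}\).

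\emph{Bound on \(\mu_0\).} I bound the numerator and the denominator of \(\mu_0\) separately.

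Numerator: the entries of \(\mathbf{P}\) are \(\mathbf{P}_{j,i}=\boldsymbol{\omega}_i\boldsymbol{\Theta}_{j,\mathbf{z}_i}\). By the first part of Assumption~\ref{ass:entrynorm},
\[
\max_{j,i}|\mathbf{P}_{j,i}|^2\le \boldsymbol{\omega}_{\max}^2C_{\boldsymbol{\Theta}}^2 .
\]

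Denominator: expand the Frobenius norm column by column,
\[
\|\mathbf{P}\|_F^2=\sum_{i=1}^n\boldsymbol{\omega}_i^2\|\boldsymbol{\theta}_{\mathbf{z}_i}\|^2=\sum_{k=1}^K\tilde n_k\|\boldsymbol{\theta}_k\|^2 .
\]
Using \(\boldsymbol{\omega}_i\ge\boldsymbol{\omega}_{\min}\) and the second part of Assumption~\ref{ass:entrynorm}, \(\|\boldsymbol{\theta}_k\|^2\ge c^2p\), this gives
\[
\|\mathbf{P}\|_F^2\ge \boldsymbol{\omega}_{\min}^2c^2p\sum_{k=1}^K n_k=\boldsymbol{\omega}_{\min}^2c^2pn .
\]

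Combining the two bounds yields
\[
\mu_0=\frac{pn\max_{j,i}|\mathbf{P}_{j,i}|^2}{\|\mathbf{P}\|_F^2}\le \frac{\boldsymbol{\omega}_{\max}^2C_{\boldsymbol{\Theta}}^2}{\boldsymbol{\omega}_{\min}^2c^2}.
\]
Since \(\beta\le1\), this is at most the stated bound \(\boldsymbol{\omega}_{\max}^2C_{\boldsymbol{\Theta}}^2/(\boldsymbol{\omega}_{\min}^2\beta c^2)\), so the factor \(\beta\) is simply slack. The argument therefore has no real obstacle. The only step requiring care is the identification \(\mathbf{V}\mathbf{V}^\top=\mathcal{P}_{\mathcal{S}}\), which relies on \(\mathbf{P}\) having rank exactly \(K\), i.e.\ on every cluster being nonempty.
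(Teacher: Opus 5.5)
Your proof is correct and follows the paper's argument for both bounds: for $\mu_0$, an entrywise bound on the numerator together with $\|\mathbf{P}\|_F^2=\sum_k\tilde n_k\|\boldsymbol{\theta}_k\|^2$; for $\mu_2$, the identification $\mathbf{V}\mathbf{V}^\top=\mathcal{P}_{\mathcal{S}}$ via $\operatorname{col}(\mathbf{P})=\operatorname{col}(\boldsymbol{\Theta})$.

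There are two small differences. First, you bound $\sum_k\tilde n_k\ge\boldsymbol{\omega}_{\min}^2 n$ directly instead of using the paper's lossy per-cluster bound $n_k\ge\beta n/K$, so you rightly observe that the factor $\beta$ is slack. Second, your rank attributions are swapped: $\operatorname{col}(\mathbf{P})=\operatorname{col}(\boldsymbol{\Theta})$ follows from $\mathbf{Z}^\top\boldsymbol{\Omega}$ having full row rank, while $\operatorname{rank}(\mathbf{P})=K$ needs $\boldsymbol{\Theta}$ to have full column rank. This is harmless, since both hypotheses are available.
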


Moreover, from the proof of Theorem~\ref{thm:exact}, we known that \(\mu_1 \le \frac{\boldsymbol{\omega}_{\max}^2}{\boldsymbol{\omega}_{\min}^2\beta}\).  
Consequently, when \(\boldsymbol{\omega}_{\min}/\boldsymbol{\omega}_{\max}=O(1)\) and \(\beta=O(1)\), the overall incoherence parameter satisfies \(\mu=O(1)\), which justifies its treatment as $O(1)$ in Corollary~\ref{cor:balanced}.
\subsection{Comparison with Prior Work}
Corollary~\ref{cor:balanced} establishes a sharp condition for exact clustering under the individual-heterogeneous sub-Gaussian mixture model.  To place this result in a broader context, we compare it with several state-of-the-art exact recovery thresholds for Gaussian mixture models.  For a unified perspective, Table~\ref{tab:comparison} summarizes the key characteristics of each work. Several important observations emerge from this comparison:

\begin{table}[!ht]
\centering
\footnotesize
\caption{Comparison of exact recovery conditions for Gaussian mixture models.  Here \(K\) is the number of clusters, \(n\) the sample size, \(p\) the dimension, \(\Delta\) the minimum center separation, and \(d=\max\{n,p\}\). Ih-GMM is short for our individual-heterogeneous sub-Gaussian mixture models.}
\label{tab:comparison}
\begin{tabular}{@{}lcccc@{}}
\toprule
\multicolumn{1}{c}{Work} & \multicolumn{1}{c}{Model} & \multicolumn{1}{c}{Algorithm} & \multicolumn{1}{c}{Allowed growth of \(K\)} & \multicolumn{1}{c}{Exact recovery condition} \\
\midrule
\citep{loffler2021optimality} &Classical GMM & Spectral clustering & \(K\) can grow & \(\Delta \gg \max\bigl\{K^{10.5}\sqrt{p/n},\ \sqrt{\log n}\bigr\}\) \\
\citep{chen2021cutoff} &Classical GMM & SDP relaxation of $K$-means & \(K\ll \frac{\log n}{\log\log n}\) & \(\Delta \gg \Bigl(1+\sqrt{1+\frac{Kp}{n\log n}}\Bigr)^{1/2}\sqrt{\log n}\) \\
\citep{ndaoud2022sharp} &Classical GMM&Spectral + Lloyd’s (hollowed Gram)& \(K=2\) & \(\Delta \gg\Bigl(1+\sqrt{1+\frac{2p}{n\log n}}\Bigr)^{1/2}\sqrt{\log n}\) (iff)\\
\citep{zhang2024leave} & Sub-Gaussian MM & Spectral clustering& \(K\ll\sqrt{n}\) & \(\Delta \gg \max\bigl\{K\sqrt{p/n},\ \sqrt{\log n}\bigr\}\) \\
This paper & Ih-GMM&Spectral clustering (hollowed Gram)&\(K\ll
\begin{cases}
n~\text{if } p\gg n,\\
\frac{n}{\log^2 n}~\mathrm{othwerwise}
\end{cases}\)& \(\Delta \gg \sqrt{K\log d}\;\max\Bigl\{1,\; (\frac{p}{n})^{\frac{1}{4}}\Bigr\}\) \\
\bottomrule
\end{tabular}
\end{table}
\begin{description}
  \item[\normalfont\emph{Model complexity}:] The classical GMM considered in \citep{loffler2021optimality,chen2021cutoff,ndaoud2022sharp} assumes that each observation is generated as \(\mathbf{x}_i = \boldsymbol{\theta}_{\mathbf{z}_i} + \boldsymbol{\varepsilon}_i\) with i.i.d. Gaussian noise \(\boldsymbol{\varepsilon}_i\sim\mathcal{N}(\mathbf{0},\sigma^2\mathbf{I}_p)\); it contains no individual scaling parameters.  The sub-Gaussian mixture model considered in \citep{zhang2024leave} generalizes the noise distribution to sub-Gaussian but still assumes homoskedastic noise.  In contrast, our Ih-GMM introduces individual heterogeneity parameters \(\omega_i\) that multiplicatively scale the cluster center \(\boldsymbol{\theta}_{\mathbf{z}_i}\), i.e., \(\mathbf{x}_i = \omega_i\boldsymbol{\theta}_{\mathbf{z}_i} + \boldsymbol{\varepsilon}_i\), where the noise \(\boldsymbol{\varepsilon}_i\) is sub-Gaussian and can be heteroskedastic.  This allows each observation to have its own scale, capturing quantitative heterogeneity within clusters—a feature absent in previous models. Despite this substantial increase in model complexity, our Algorithm \ref{alg:IhSC} achieves exact recovery under conditions that are remarkably mild, as demonstrated in Corollary~\ref{cor:balanced}.
\item[\normalfont\emph{Algorithmic simplicity and computational efficiency:}]
The method proposed in this paper is a spectral method that zeros out the diagonal of the Gram matrix and then applies \(K\)-means on the row-normalized eigenvectors. It is straightforward to implement and requires no iterative optimization. In contrast, \citep{chen2021cutoff} relies on a semidefinite programming (SDP) relaxation, which is computationally much more demanding and does not scale well to large datasets.  \citep{ndaoud2022sharp} uses a spectral initialization followed by Lloyd’s iterations; while efficient, it is designed specifically for the two-component case (\(K=2\)) and does not extend naturally to general \(K\).  The spectral methods in \citep{loffler2021optimality} and \citep{zhang2024leave} work directly on the data matrix and are also efficient, but they assume homoskedastic noise and do not incorporate individual heterogeneity. Thus, among the algorithms that achieve exact recovery, the proposed method handles a substantially more general model (individual heterogeneity, heteroskedastic sub-Gaussian noise) while remaining as simple and computationally feasible as classical spectral clustering.
\item[\normalfont\emph{Growth of the number of clusters} \(K\):]
A major distinction lies in the allowed growth of \(K\).  
\begin{itemize}
  \item \citep{loffler2021optimality} allows \(K=o(n)\), but its separation condition contains an extremely high power \(K^{10.5}\sqrt{p/n}\), which dominates the baseline \(\sqrt{\log n}\) when \(K\) is large.  The high exponent \(10.5\) arises because their primary goal is to establish the minimax optimality of spectral clustering, not merely exact recovery; the exponent is an inevitable cost of the technical tools used to achieve optimality.  Consequently, the condition becomes prohibitively strong even for moderately large \(K\).
    \item \citep{chen2021cutoff} permits \(K\) to grow only as \(K\ll \log n/\log\log n\), a poly-logarithmic rate.
   \item \citep{ndaoud2022sharp} is restricted to the special case \(K=2\).
  \item \citep{zhang2024leave} imposes the constraint \(\beta n/K^2\ge 10\), i.e., \(K = o(\sqrt{n})\).  Thus the exponent on \(K\) is mild, but the allowed growth is limited to the order \(\sqrt{n}\).
  \item Under our more realistic model Ih-GMM, through the sample size conditions \eqref{cond:cor}, our work allows \(K\) to grow as \(K\ll
\begin{cases}
n~\text{if } p\gg n,\\
\frac{n}{\log^2 n}~\mathrm{othwerwise},
\end{cases}\)
    which in the low-dimensional regime is \(K\ll n/\log^2 n\) (much larger than poly-logarithmic) and in the high-dimensional regime can be as large as $o(n)$.  This flexibility is essential for modern applications where the number of latent classes can be large and the dimension may far exceed the sample size.
    \end{itemize}
\item[\normalfont\emph{Dependence on} \(K\) \emph{in the separation condition:}] For the high-dimensional regime \(p\gg n\), the exact recovery conditions can be expressed in the form \(\Delta \gg \psi(K,p/n)\sqrt{\log n}\), where \(\psi(K,p/n)\) captures the dependence on \(K\) and the aspect ratio $p/n$.  The factor \(\psi(K,p/n)\) differs across works:
\begin{itemize}
  \item \citep{loffler2021optimality}: \(\psi(K,p/n) = K^{10.5}\sqrt{p/n}\) (when this term dominates the baseline \(\sqrt{\log n}\)).    
  \item \citep{chen2021cutoff}: \(\psi(K,p/n) = (\frac{Kp\log n}{n})^{1/4}\).
  \item \citep{ndaoud2022sharp}: for the two-component case (\(K=2\)), the sharp threshold is \(\Delta \gg \bigl(1+\sqrt{1+\frac{2p}{n\log n}}\bigr)^{1/2}\sqrt{\log n}\); this yields \(\psi(K,p/n) = \bigl(\frac{p\log n}{n}\bigr)^{1/4}\) when \(p \gg n\) and \(K=2\).
  \item \citep{zhang2024leave}: \(\psi(K,p/n) = K\sqrt{p/n}\) (when this term dominates the baseline).
  \item This paper: \(\psi(K,p/n) = (p/n)^{1/4}\sqrt{K\log p}\).
\end{itemize}

Comparing this work with \citep{loffler2021optimality} and \citep{zhang2024leave}, our condition replaces the \(\sqrt{p/n}\) factor with the much smaller \((p/n)^{1/4}\) and reduces the exponent of \(K\) from \(10.5\) or \(1\) to \(1/2\).  
Relative to \citep{chen2021cutoff}, our exponent on \(K\) is larger (\(1/2\) vs. \(1/4\)), but our method allows \(K\) to grow polynomially in \(n\) (rather than only poly-logarithmically) and uses a much simpler spectral algorithm without SDP under the more general individual-heterogeneous sub-Gaussian mixture model.  Compared with the sharp result of \citep{ndaoud2022sharp} for the two-component case (\(K=2\)), our condition introduces an extra factor \(\sqrt{K}\) (which becomes \(\sqrt{2}\) when \(K=2\)) and an additional \(\sqrt{\log p}\) term, reflecting the price paid for handling general \(K\ge 2\), individual heterogeneity, and sub-Gaussian heteroskedastic noise. When \(K=2\) and \(p\ll n\), however, both conditions reduce to the same optimal threshold \(\Delta \gg \sqrt{\log n}\).  What's more, when \(K\) is fixed and \(p\ll n\), the condition in this paper reduces to \(\Delta \gg \sqrt{\log n}\), which matches the optimal threshold established in all the classical results listed above.  Thus, while no single method dominates all others, this work achieves a favorable balance between model generality, algorithmic simplicity, and the ability to handle large \(K\) in high dimensions, while remaining optimal in the classical low-dimensional setting.
\item[\normalfont\emph{Dependence on the aspect ratio} \(p/n\):] In the high-dimensional regime \(p\gg n\), the penalty on \(p/n\) in Ih-GMM is \((p/n)^{1/4}\), matching the optimal rate established for the two-component case in \citep{ndaoud2022sharp} and for the \(K\)-component case in \citep{chen2021cutoff} (up to the factor involving \(K\)).  This is a substantial improvement over the \(\sqrt{p/n}\) penalty incurred by \citep{loffler2021optimality} and \citep{zhang2024leave}.  The improvement stems from the use of the hollowed Gram matrix, which effectively removes the bias in the diagonal entries and reduces the noise effect.
\end{description}

In summary, our Ih-GMM offers a unique combination: it operates under a substantially more general model (individual heterogeneity, heteroskedastic noise), uses an extremely simple and fast spectral algorithm, allows \(K\) to grow polynomially with \(n\), and achieves an exact recovery condition that matches the optimal rate in terms of \(p/n\) while exhibiting a \(\sqrt{K}\) dependence that is far milder than existing spectral methods.  This demonstrates that the combination of the hollowed Gram matrix and the refined perturbation theory of \citep{cai2021subspace} yields a powerful framework for clustering in complex, high-dimensional, heterogeneous data.
\section{Simulation Studies}\label{sec:sim}
In this section, we conduct extensive numerical experiments to evaluate the exact recovery performance of the proposed IhSC algorithm under the Ih-GMM model. To place our method in a broader context, we compare it with a representative algorithm designed for classical Gaussian (or sub‑Gaussian) mixture models: the projected spectral clustering (PSC) of \citep{zhang2024leave}. Note that PSC is equivalent to the spectral clustering algorithm studied in \citep{loffler2021optimality} under the classical Gaussian mixture model. We also compare IhSC with k‑means, which is applied directly to the data matrix $\mathbf{X}$ to estimate the cluster labels. We systematically investigate the influence of five key parameters: the sample size \(n\), the feature dimension \(p\), the number of clusters \(K\), the heterogeneity strength \(R = \omega_{\max}/\omega_{\min}\), and the cluster imbalance degree \(\beta\).  
Two noise scenarios are considered:
\begin{itemize}
  \item \textbf{GHe} (Gaussian heteroscedastic): \(\boldsymbol{\varepsilon}_i \sim \mathcal{N}(\mathbf{0},\sigma_i^2\mathbf{I}_p)\) with \(\sigma_i \sim \mathrm{Uniform}(0.5,1.5)\);
  \item \textbf{SHe} (sub‑Gaussian heteroscedastic): \(\epsilon_{ij} = \sigma_i r_{ij}\), where \(r_{ij}\) are i.i.d. Rademacher (\(\pm1\) with equal probability) and \(\sigma_i \sim \mathrm{Uniform}(0.5,1.5)\).  This distribution has sub‑Gaussian norm \(\|\epsilon_{ij}\|_{\psi_2}= \sigma_i\) and satisfies the model assumption.
\end{itemize}

For each parameter setting, we generate \(200\) independent data sets and report the average misclassification rate and the proportion of runs that achieve exact recovery (i.e., \(\ell(\hat{\mathbf{z}},\mathbf{z})=0\)) for each method.  The misclassification rate is defined as
\[
\ell_{\mathrm{rate}}(\hat{\mathbf{z}},\mathbf{z}) = \min_{\phi \in \Phi} \frac{1}{n}\sum_{i=1}^n \mathbb{I}_{\{\phi(\hat{\mathbf{z}}_i) \neq \mathbf{z}_i\}}.
\]

All synthetic data are generated according to the Ih-GMM model. The cluster centers \(\boldsymbol{\theta}_k\) are constructed as orthogonal vectors: first generate a random \(p\times K\) matrix with i.i.d. standard normal entries, perform its QR decomposition, take the first \(K\) columns, and scale them so that \(\|\boldsymbol{\theta}_k-\boldsymbol{\theta}_\ell\| = \Delta\) for all \(k\neq\ell\) (specifically, \(\boldsymbol{\theta}_k = \frac{\Delta}{\sqrt{2}}\,Q_{:,k}\)).  
The individual scales \(\omega_i\) are drawn independently from \(\mathrm{Uniform}(1,R)\). Thus, \(R\) controls the heterogeneity strength and \(R=1\) reduces to the classical homogeneous case.  
Cluster labels are generated to achieve a prescribed imbalance factor \(\beta\): the smallest cluster size is set to \(n_{\min} = \max\bigl(1,\lfloor\beta n/K\rfloor\bigr)\), the remaining observations are distributed as evenly as possible among the other clusters, and the final assignment is randomly permuted.  The separation is set as
\[
\Delta = C\,\sqrt{K\log d}\;\max\Bigl\{1,\bigl(\tfrac{p}{n}\bigr)^{\frac{1}{4}}\Bigr\},\qquad d=\max\{n,p\},
\]
with \(C=3\) to guarantee the separation condition of Corollary~\ref{cor:balanced}.
\subsection{Experiment 1: Influence of \(n\) and \(p\)}

We first examine the effect of the sample size \(n\) and the feature dimension \(p\) when the cluster sizes are balanced (\(\beta=1\)) and the heterogeneity is moderate (\(R=20\)). Three sub‑experiments are run:
\begin{enumerate}
  \item \textbf{\(n\gg p\) (sample‑rich regime):} \(p=200,\ K=4\); \(n\in\{500,1000,1500,2000,2500\}\).
  \item \textbf{\(n= p\) (balanced regime):} \(p=n,\ K=4\); \(n\in\{100,200,300,400,500\}\).
  \item \textbf{\(p\gg n\) (high‑dimensional regime):} \(n=200,\ K=4\); \(p\in\{1000,2000,3000,4000,5000\}\).
\end{enumerate}

Figure~\ref{fig:ex1} reports the misclassification rates (top panel) and exact recovery proportions (bottom panel) for the three competing methods under the two noise scenarios, as the sample size \(n\) or the feature dimension \(p\) varies. Several clear patterns emerge. First, the proposed IhSC achieves perfect clustering (exact recovery proportion = 1 and misclassification rate = 0) in every configuration. This holds regardless of whether we are in the sample‑rich regime (\(n\gg p\)), the balanced regime (\(n=p\)), or the high‑dimensional regime (\(p\gg n\)), and for both Gaussian and sub‑Gaussian heteroscedastic noise. The result is exactly what our theory predicts: the row normalization step effectively eliminates the influence of the individual scale parameters \(\omega_i\), so that the rows of the normalized eigenvector matrix become identical within each cluster and separated by \(\sqrt{2}\) across clusters. Hence even a relatively large heterogeneity strength \(R=20\) does not harm the performance. In comparison, the two classical spectral methods, PSC and k-means, show clearly poor performance. Their misclassification rates are consistently above 0.1, and their exact recovery proportions are well below 0.5 in the sample‑rich regime and below 0.2 in the other two regimes. 
\begin{figure}[!htbp]
\centering
\resizebox{\columnwidth}{!}{
{\includegraphics[width=5\textwidth]{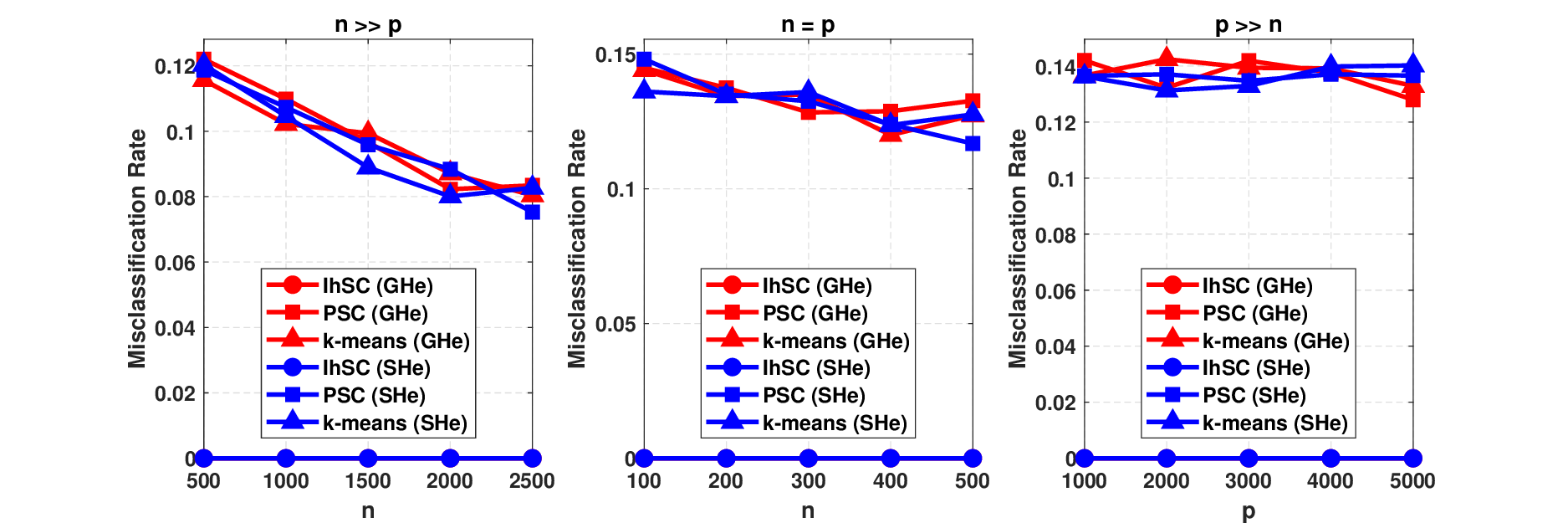}}
}
\resizebox{\columnwidth}{!}{
{\includegraphics[width=5\textwidth]{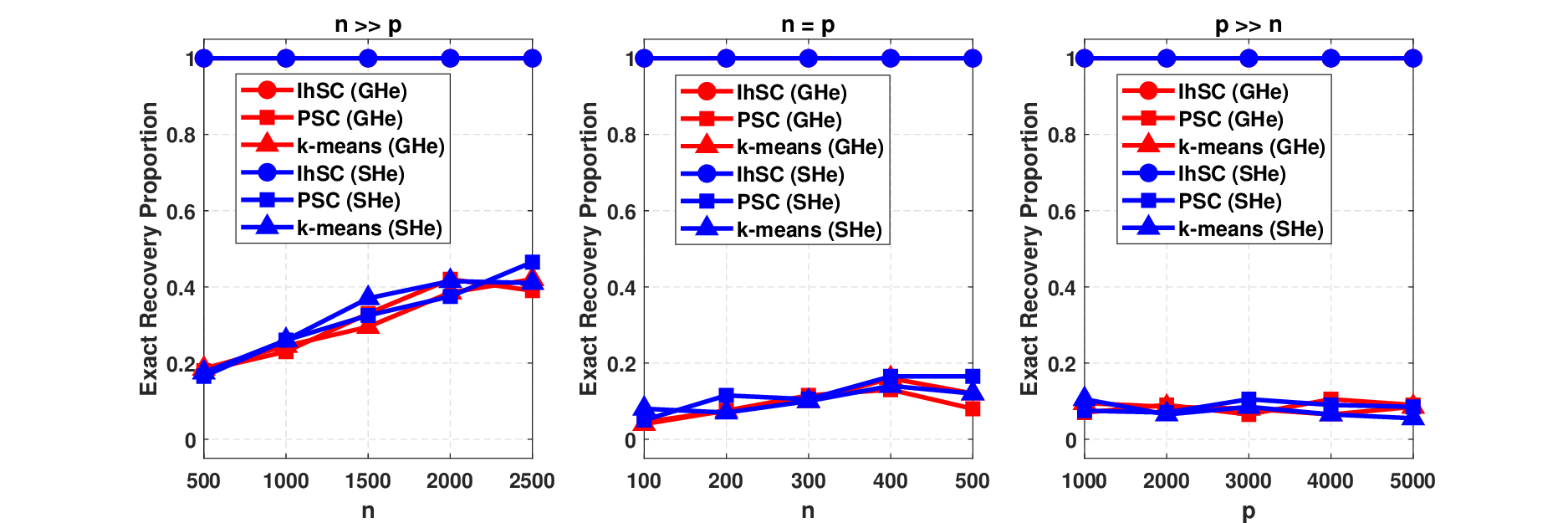}}
}
\caption{Numerical results of Experiment 1.}
\label{fig:ex1} 
\end{figure}
\subsection{Experiment 2: Influence of the Number of Clusters \(K\)}

We now investigate how the algorithm performs as the number of clusters \(K\) increases. Three scenarios are considered:
\begin{itemize}
  \item \textbf{\(n\gg p\) (sample-rich regime):} \(n=5000, p=1000\), \(R=20\), \(\beta=1\).  Set \(K\in\{2,3,\dots,8\}\).
\item \textbf{\(n= p\) (balanced regime):} \(n=1000, p=1000\), \(R=20\), \(\beta=1\).  Set \(K\in\{2,3,\dots,8\}\).
  \item \textbf{\(p\gg n\) (high‑dimensional regime):} \(n=1000, p=5000\), \(R=20\), \(\beta=1\).  Set \(K\in\{2,3,\dots,8\}\).
\end{itemize}

Figure~\ref{fig:ex2} shows the numerical results of this experiment. Our IhSC again achieves perfect exact recovery in every case, with zero misclassification error and a recovery proportion of one. The result is remarkable: even when the clustering problem becomes more complex with more groups, our IhSC continues to separate the clusters without any error. The behavior of PSC and k-means stands in sharp contrast. Both methods suffer from high misclassification errors across all values of \(K\). Their exact recovery proportions are far below one, and increase as the number of clusters increases. 
\begin{figure}[!htbp]
\centering
\resizebox{\columnwidth}{!}{
{\includegraphics[width=5\textwidth]{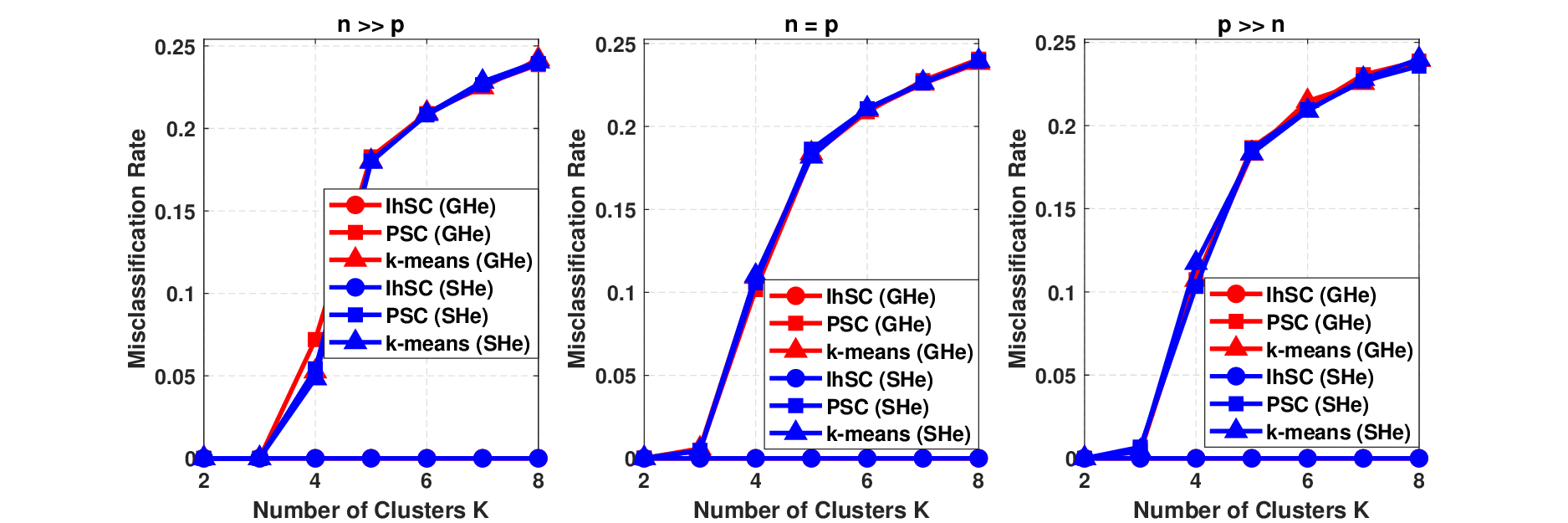}}
}
\resizebox{\columnwidth}{!}{
{\includegraphics[width=5\textwidth]{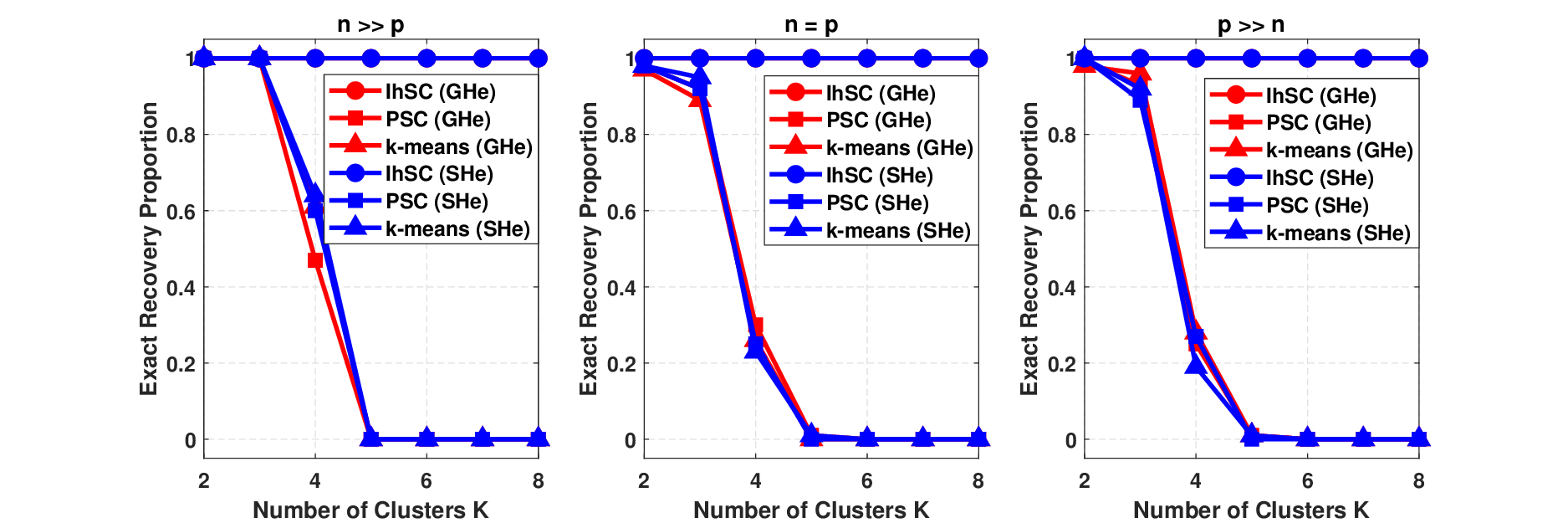}}
}
\caption{Numerical results of Experiment 2.}
\label{fig:ex2} 
\end{figure}
\subsection{Experiment 3: Influence of Heterogeneity Strength \(R\)}
To assess the algorithm’s robustness to strong individual heterogeneity, we fix the balanced setting (\(\beta=1\)) with \(n=500\), \(p=1000\), \(K=3\), and vary \(R\in\{5,10,\ldots,100\}\).  
In this experiment the separation \(\Delta\) is kept constant, equal to the value required for the moderate heterogeneity \(R=5\) under the theoretical condition (i.e., \(\Delta = 3\sqrt{3\log 1000}\,\max\{1,(1000/500)^{1/4}\}\approx16.2408\)). By fixing \(\Delta\) we can observe the effect of increasing \(R\) alone, even though the theoretical sufficient condition may no longer be satisfied for large \(R\). 
\begin{figure}[!htbp]
\centering
\resizebox{\columnwidth}{!}{
{\includegraphics[width=3\textwidth]{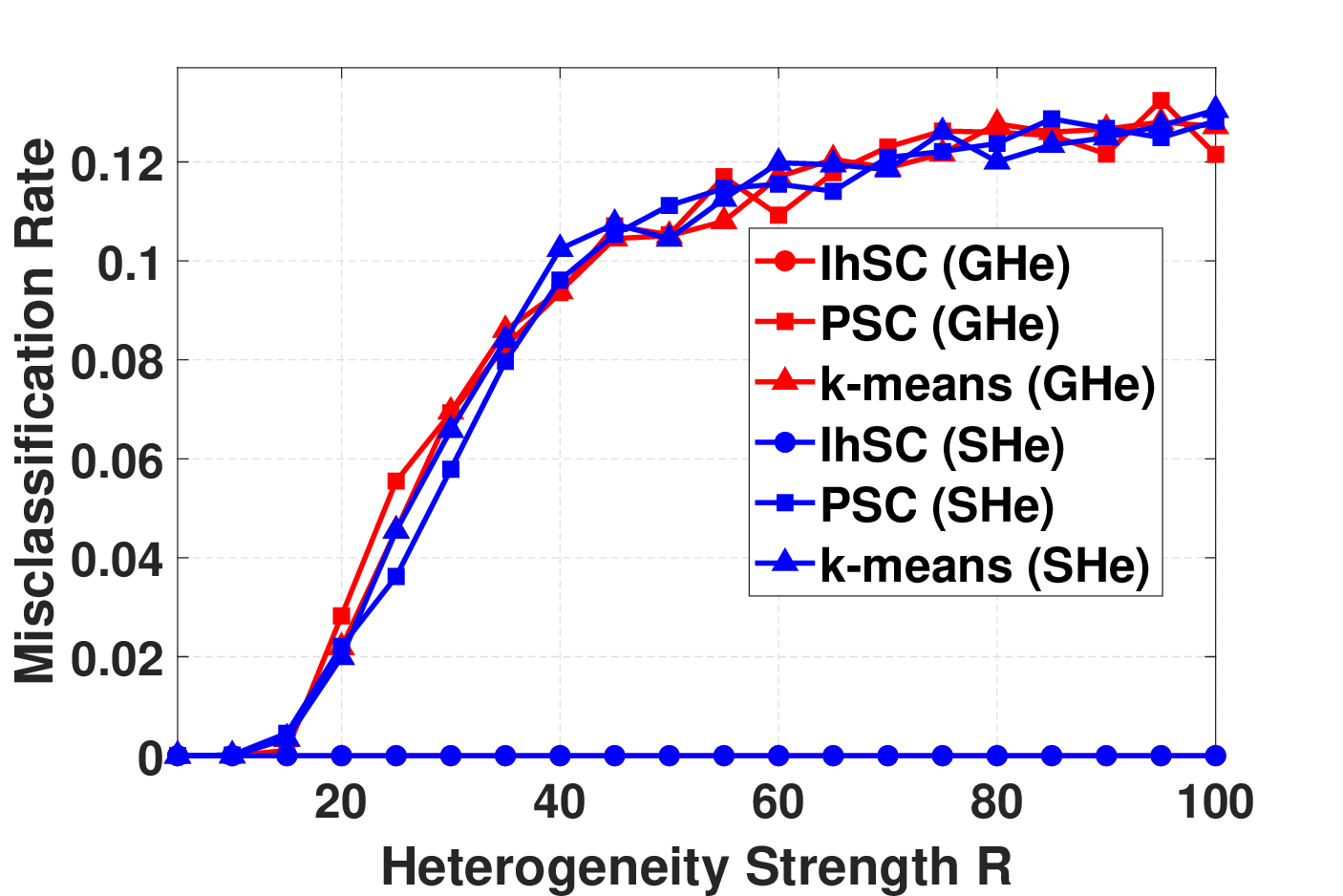}}
{\includegraphics[width=3\textwidth]{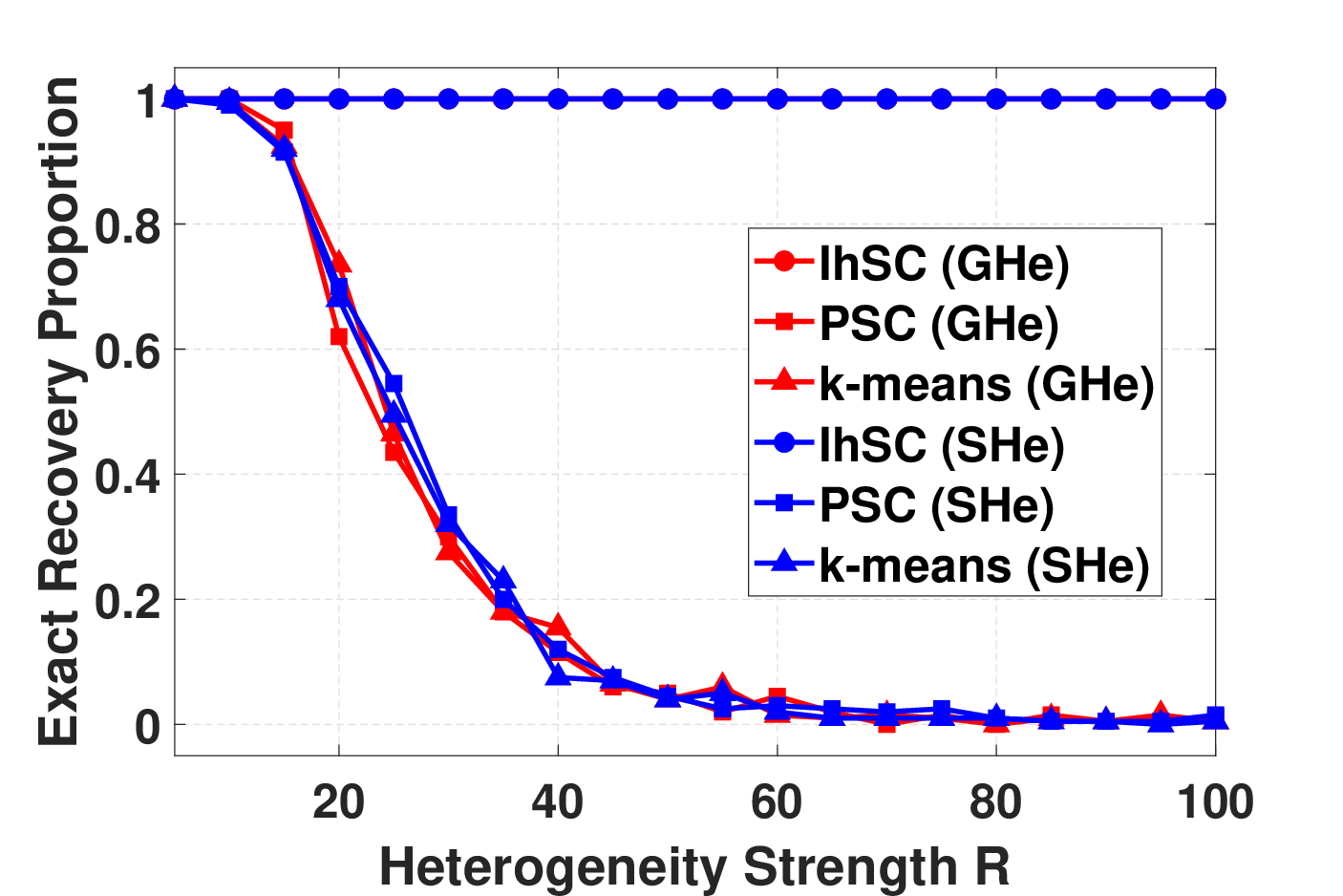}}
}
\caption{Numerical results of Experiment 3.}
\label{fig:ex3} 
\end{figure}

Figure~\ref{fig:ex3} shows how these methods behave when the heterogeneity strength \(R\) grows. Our IhSC achieves perfect recovery for every value of \(R\), with zero misclassification and exact recovery proportion one. This confirms that row normalization completely cancels the effect of the individual scale parameters \(\omega_i\). In contrast, PSC and k-means perform well only when \(R\) is small. As \(R\) increases beyond a moderate level, their misclassification rates rise steadily and their exact recovery proportions drop sharply, eventually approaching zero. 
\subsection{Experiment 4: Influence of Cluster Imbalance \(\beta\)}
Finally, we study the impact of unbalanced cluster sizes.  We fix \(n=200\), \(p=1000\), \(K=3\), \(R=20\), and let \(\beta\in\{0.1,0.2,\ldots,1\}\).  
The separation is fixed to the value required for the balanced case (\(\beta=1\)), i.e., \(\Delta = 3\sqrt{3\log 1000}\,\max\{1,(1000/200)^{1/4}\}\approx20.4217\). This setting allows us to examine how severe imbalance affects the exact recovery capability while keeping the signal strength constant. 
\begin{figure}[!htbp]
\centering
\resizebox{\columnwidth}{!}{
{\includegraphics[width=3\textwidth]{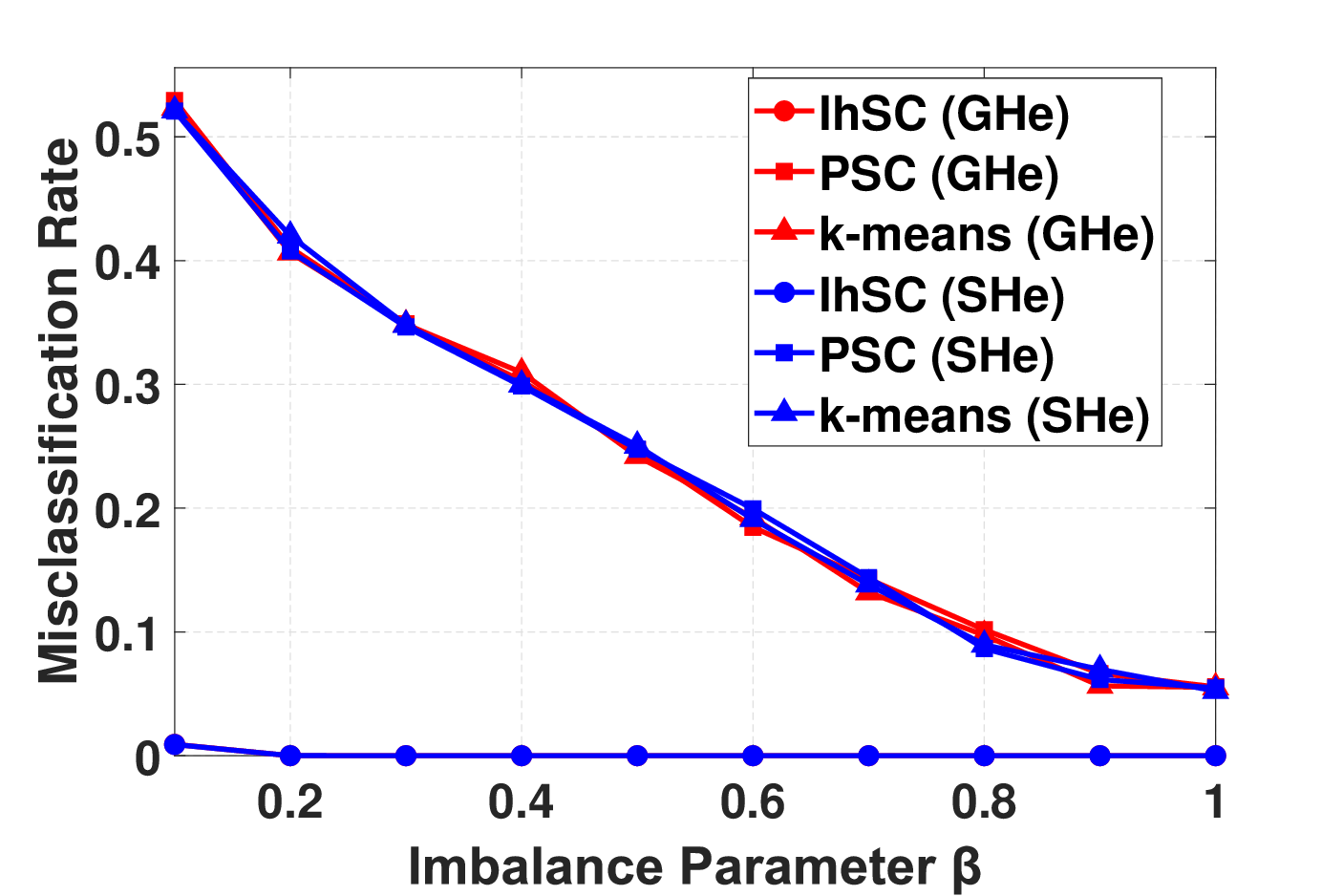}}
{\includegraphics[width=3\textwidth]{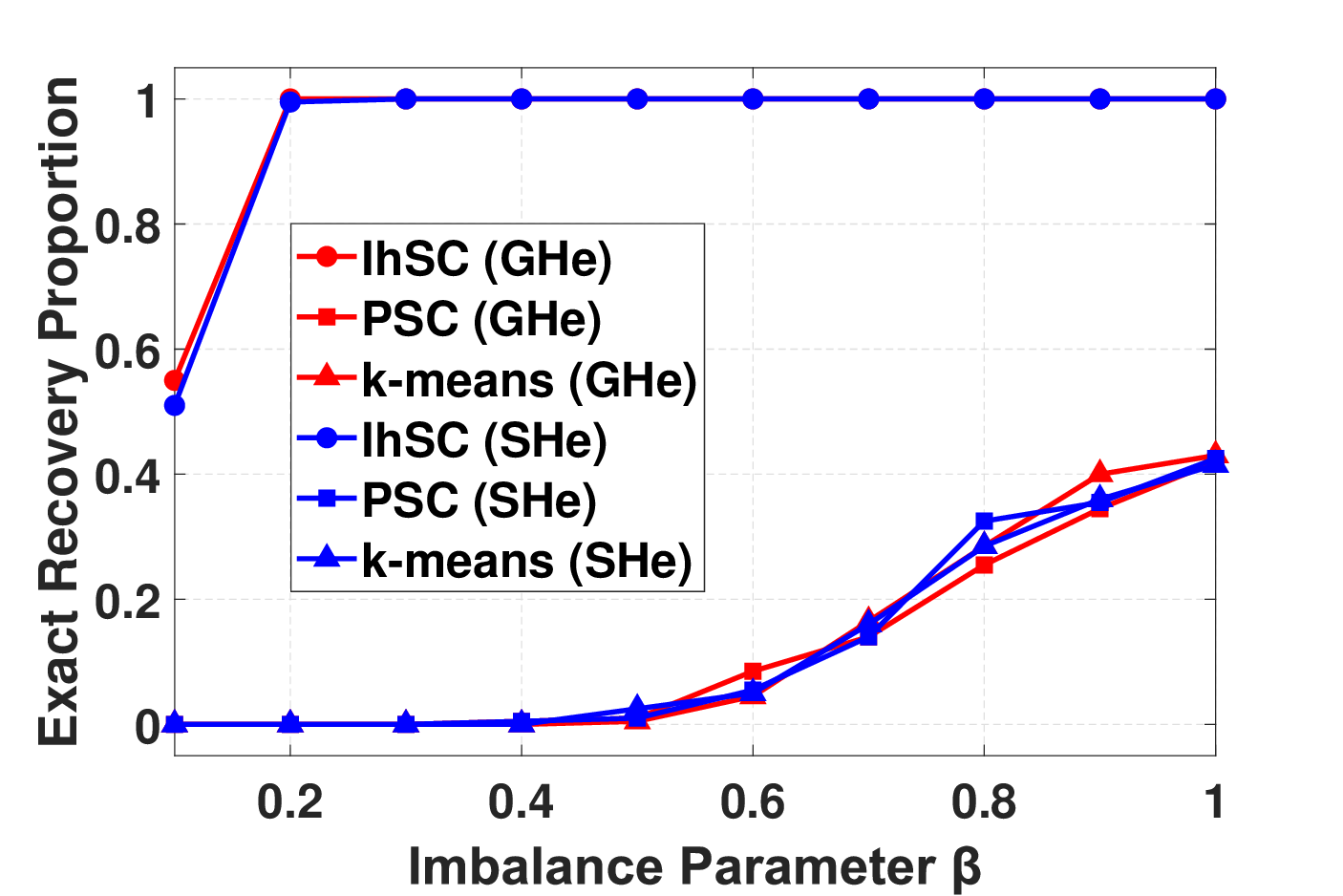}}
}
\caption{Numerical results of Experiment 4.}
\label{fig:ex4} 
\end{figure}

Figure~\ref{fig:ex4} shows how the methods perform as the cluster sizes become more balanced, with \(\beta\) increasing from 0.1 to 1. When the imbalance is extreme (\(\beta = 0.1\)), our IhSC fails to achieve perfect recovery, with its exact recovery proportion falling below 0.5. However, once \(\beta\) reaches 0.2 or larger, IhSC recovers the true labels exactly in every run, maintaining a perfect recovery proportion of one and zero misclassification. This demonstrates that the row normalization step is highly effective as long as the smallest cluster is not extremely tiny. The classical methods, PSC and k-means, improve steadily as the clusters become more balanced: their misclassification rates decline and their exact recovery proportions rise. This improvement is natural because a more even distribution of observations across clusters makes the clustering problem easier. Nevertheless, even at the most balanced setting \(\beta = 1\), neither method achieves a recovery proportion above 0.5. The gap between IhSC and the benchmarks is therefore clear: only IhSC can consistently deliver exact recovery once the smallest cluster has a reasonable size, while the classical methods never reach that level of performance.
\section{Real Data Applications}\label{secrealdata}

This section evaluates the practical performance of our proposed IhSC algorithm on nine real-world datasets. All datasets come with known ground truth class labels, so we can compute exact misclassification rates and compare IhSC with the two competing  methods, k-means and PSC, that were already used in our simulation studies. The results help to demonstrate the effectiveness of our approach. The details of these datasets are provided below \footnote{The Iris, Wine, Seeds, and Dermatology datasets are available from the UCI Machine Learning Repository (\url{https://archive.ics.uci.edu}). The DNA, segment, satimage, usps, and pendigits datasets can be obtained from the LIBSVM Data website (\url{https://www.csie.ntu.edu.tw/~cjlin/libsvmtools/datasets/multiclass.html\#splice}).}:
\begin{itemize}
\item \textbf{Iris.} This classic benchmark gives sepal length, sepal width, petal length, and petal width for three iris species \citep{fisher1936use}. Although the dimension is low (\(p = 4\)), the three classes are not perfectly separable, making it a nontrivial test. There are \(n = 150\) observations and \(K = 3\) true species labels.
\item \textbf{Wine.} This classic classification dataset gives \(p = 13\) chemical attributes of wines from three different cultivars \citep{aeberhard1994comparative}. The attributes include alcohol, malic acid, ash, and others. There are \(n = 178\) samples and \(K = 3\) true cultivar labels.
\item \textbf{Seeds.} This dataset contains geometric measurements of wheat kernels from three varieties \citep{charytanowicz2010complete}. The \(p = 7\) attributes are area, perimeter, compactness, length, width, asymmetry coefficient, and groove length. It has \(n = 210\) samples and \(K = 3\) true variety labels.
\item \textbf{Dermatology.} This dataset contains clinical and histopathological features for differential diagnosis of six erythemato-squamous diseases \citep{guvenir1998learning}. It has \(n = 358\) samples and \(p = 34\) attributes: 12 clinical features, 22 histopathological features. All clinical and histopathological features are rated on a 0 to 3 scale. The true disease labels (\(K = 6\)) are determined by biopsy and clinical evaluation.
\item \textbf{DNA.} This dataset comes from the Statlog DNA splicing benchmark \citep{michie1995machine}. Each sample is a fixed-length DNA sequence encoded by \(p = 180\) binary features that indicate donor and acceptor splice sites. It has \(n = 2000\) samples and \(K = 3\) true biological classes.
\item \textbf{segment.} This Statlog dataset \citep{michie1995machine} consists of image segments derived from outdoor images. It has \(n = 2310\) samples, \(p = 19\) features, and \(K = 7\) classes representing different surface types: brickface, sky, foliage, cement, window, path, and grass.
\item \textbf{satimage.} This multispectral satellite image dataset from the Statlog collection \citep{michie1995machine} contains \(n = 4435\) training pixels, each described by \(p = 36\) spectral bands. The task is to assign each pixel to one of \(K = 6\) land cover classes: red soil, cotton crop, grey soil, damp grey soil, soil with vegetation stubble, and very damp grey soil.
\item \textbf{usps.} The USPS handwritten digit dataset \citep{hull2002database} consists of \(n = 7291\) training grayscale images, each of size \(16 \times 16\) pixels, yielding \(p = 256\) features. The goal is digit classification with \(K = 10\) classes (digits 0-9).
\item \textbf{pendigits.} This UCI Pen-Based Recognition of Handwritten Digits dataset \citep{Alpaydin1998} contains samples from 44 writers. As provided in the LIBSVM repository, it has \(n = 7494\) training samples, \(p = 16\) integer features representing pen-tip $(x,y)$ coordinates during digit writing, and \(K = 10\) target classes (digits 0-9).
\end{itemize}

\begin{table}[htbp]
\centering
\caption{Misclassification rates of IhSC, k-means, and PSC on the nine real datasets.}
\label{tab:real_results}
\begin{tabular}{lccccccccccc}
\toprule
Method &Iris  & Wine & Seeds& Dermatology& DNA&segment&satimage&usps
&pendigits\\
\midrule
IhSC   & \textbf{7/150}& 9/178 & 20/210 &\textbf{ 20/358} & \textbf{353/2000}&\textbf{735/2310}&\textbf{1381/4435}&\textbf{2027/7291}&\textbf{1922/7494}\\
PSC    & 16/150 & 7/178 & \textbf{16/210} & 30/358 & 580/2000&770/2310&1468/4435&2360/7291&2443/7494\\
k-means    & 16/150 &\textbf{ 6/178} & 17/210 & 95/358 & 566/2000&773/2310&1471/4435&2316/7291&2262/7494\\
\bottomrule
\end{tabular}
\end{table}

Table~\ref{tab:real_results} reports the misclassification counts of IhSC, $k$-means, and PSC on these nine real benchmark datasets. IhSC achieves the lowest error on seven of the nine datasets: Iris, Dermatology, DNA, segment, satimage, usps, and pendigits. The improvements are often substantial. On Dermatology, IhSC misclassifies only 20 out of 358 samples, compared to 95 for $k$-means and 30 for PSC, reducing the error by 75 compared to $k$-means and by 10 compared to PSC. On DNA, the advantage is clear: 353 errors versus 566 for $k$-means and 580 for PSC, an improvement of 213 errors over $k$-means and 227 over PSC. On the larger and more challenging multi-class tasks, namely segment (7 classes, 2310 samples), satimage (6 classes, 4435 samples), usps (10 classes, 7291 samples), and pendigits (10 classes, 7494 samples), IhSC consistently outperforms both baselines. The largest absolute gap occurs on pendigits, where IhSC makes 1922 errors versus 2262 for $k$-means and 2443 for PSC, a difference of 340 errors from the better baseline ($k$-means). Even on the small Iris dataset, IhSC cuts the error from 16 to 7, fewer than the 16 errors made by the other methods. On the remaining two datasets, Wine and Seeds, IhSC remains highly competitive. It trails the best performer by only 3 errors on Wine (9 vs. 6) and by 4 on Seeds (20 vs. 16). Overall, IhSC does not merely match existing methods. It consistently outperforms them on the majority of tasks, with particularly large gains on high-dimensional, multi-class, or heterogeneous data. This pattern strongly supports the practical value of explicitly modelling individual heterogeneity in real-world clustering problems.

\section{Conclusion}\label{sec:con}

We have introduced the individual-heterogeneous sub-Gaussian mixture model (Ih-GMM), a new framework that extends the classical Gaussian mixture model by allowing each observation to carry its own scale parameter.  This seemingly minor modification makes the model considerably more realistic for many applications, where data points naturally exhibit different intensities.  

Along with the model, we proposed an efficient spectral algorithm: hollow out the Gram matrix, extract its leading eigenvectors, row-normalize, and then apply \(K\)-means. We proved that, under mild separation conditions on the cluster centers, this algorithm achieves exact recovery of the true labels with high probability. Compared with existing results for classical GMM, our separation condition is significantly weaker, featuring a much milder dependence on the number of clusters \(K\) and on the aspect ratio \(p/n\). Moreover, our analysis allows the number of clusters \(K\) to grow substantially larger than what is permitted in prior works, while still maintaining the exact recovery guarantee. When \(K\) is fixed and the dimension \(p\) is much smaller than the sample size \(n\), our condition reduces to the well-known optimal threshold \(\Delta \gg \sqrt{\log n}\). Numerical experiments demonstrate that our method consistently outperforms its competitors.

The importance of our Ih-GMM goes far beyond a mere technical generalization. The classical GMM has been widely studied for decades, yet its assumption that all observations share the same scale is often unrealistic in practice. By introducing individual scaling parameters \(\omega_i\), Ih-GMM opens a new research direction, much like the degree-corrected stochastic block model (DC-SBM) did for community detection in network analysis. Before DC-SBM, the standard SBM ignored degree heterogeneity, leading to poor fits for real networks. After its introduction, a substantial body of work emerged, covering community detection under degree heterogeneity, parameter estimation, model selection, and computationally scalable algorithms. The same is expected to hold for the proposed Ih-GMM. The model is rich enough to capture real-world heterogeneity, yet sufficiently structured to permit rigorous theoretical analysis. Our simple spectral method provides a solid baseline, and we expect many more advanced algorithms to follow. For practitioners, Ih-GMM offers a trustworthy framework for clustering data with substantial individual variation. The method is easy to implement, computationally efficient, and comes with strong theoretical guarantees for exact recovery, making it a reliable tool for real-world applications.

Like DC-SBM, Ih-GMM is not an end but a beginning. This model opens up many promising directions for future research. Estimating the number of clusters \(K\) when it is unknown is a meaningful yet challenging problem under our Ih‑GMM framework. The presence of individual heterogeneity parameters \(\boldsymbol{\omega}_i\) together with the possibility of heteroskedastic noise makes classical information criteria like AIC or BIC based methods, difficult to apply directly, because the noise distribution interacts with the scaling parameters in a nontrivial way. A natural and theoretically tractable route is to first investigate several important special cases of Ih‑GMM, which serve as stepping stones toward the full model. Specifically, we suggest the following two settings as natural starting points.
\begin{itemize}
  \item \emph{Homoskedastic noise:} 
   \[
   \mathbf{x}_i = \boldsymbol{\omega}_i \boldsymbol{\theta}_{\mathbf{z}_i} + \boldsymbol{\varepsilon}_i,\qquad i=1,2,\dots,n,
   \]  
   where \(\{\boldsymbol{\varepsilon}_i\}\) are independent and identically distributed Gaussian or sub‑Gaussian with mean zero and common covariance \(\sigma^2 \mathbf{I}_p\). This corresponds to the noise homogeneous version of Ih‑GMM, where the only source of individual variation is the scale parameter \(\boldsymbol{\omega}_i\).
  \item \emph{Class specific homoskedastic noise:} 
   \[
   \mathbf{x}_i = \boldsymbol{\omega}_i \boldsymbol{\theta}_{\mathbf{z}_i} + \sigma_{\mathbf{z}_i} \boldsymbol{\varepsilon}_i,\qquad i=1,2,\dots,n,
   \]  
   where \(\{\boldsymbol{\varepsilon}_i\}\) are independent and identically distributed Gaussian or sub‑Gaussian with mean zero and identity covariance, and \(\sigma_{\mathbf{z}_i}>0\) is a class specific noise scale. This allows the noise level to differ across clusters while remaining constant within each cluster, capturing a common form of heteroskedasticity encountered in real applications.
\end{itemize}

Both settings retain the core individual heterogeneity structure of Ih‑GMM while simplifying the noise component. Completing the work on estimating \(K\) under these special cases would provide valuable insights and tools, and is expected to shed light on the more general Ih‑GMM where noise may also vary freely across observations. We therefore view the problem of estimating \(K\) as a promising avenue for future research, with a clear path of progressive generalization.

Another important direction is to design faster algorithms with rigorous theoretical guarantees, for instance by exploiting randomized techniques, as eigen‑decomposition becomes expensive for large‑scale datasets.  Deeper theoretical questions include establishing minimax lower bounds and sharp phase transitions for exact recovery, which would reveal whether the simple spectral method already attains the information-theoretic limit. Model selection between the classical GMM and our Ih-GMM poses a practically relevant yet non-trivial task, requiring hypothesis tests or information criteria that account for the many scale parameters. Robustness to outliers is also critical. Spectral methods that remain provably accurate in the presence of anomalous observations would greatly enhance practical applicability. Beyond these, the development of more sophisticated estimators for the model parameters, such as the scaling parameters and the cluster centers, remains a promising direction for future work. In addition, exploring optimization algorithms such as semidefinite programming relaxation for Ih-GMM could offer new insights into optimal clustering and provide alternative methods with strong theoretical guarantees. Our work demonstrates that even a simple spectral method can handle this realistic model. We hope that our Ih-GMM will become a standard benchmark for clustering under heterogeneity, just as DC-SBM did for network analysis. 
\section*{CRediT authorship contribution statement}
\textbf{Huan Qing} is the sole author of this article.

\section*{Declaration of competing interest}
The author declares no competing interests.

\section*{Data availability}
Data and code will be made available on request.

\appendix
\section{Technical proofs}
\subsection{Proof of Lemma \ref{lem:Vstructure}}
\begin{proof}
Recall that $\mathbf{Z}$ is the membership matrix, $\boldsymbol{\Omega} = \operatorname{diag}(\boldsymbol{\omega}_1,\dots,\boldsymbol{\omega}_n)$, and $\mathbf{D}_{\boldsymbol{\Omega}} = \operatorname{diag}(\tilde n_1,\dots,\tilde n_K)$ with $\tilde n_k = \sum_{i:\mathbf{z}_i=k}\boldsymbol{\omega}_i^2$.  
Define $\tilde{\mathbf{Z}} = \mathbf{Z} \mathbf{D}_{\boldsymbol{\Omega}}^{-1/2}$. We have
\[
\mathbf{Z}^\top = \mathbf{D}_{\boldsymbol{\Omega}}^{1/2} \tilde{\mathbf{Z}}^\top.
\]

The signal matrix is $\mathbf{P} = \boldsymbol{\Theta} \mathbf{Z}^\top \boldsymbol{\Omega}$. Using the above relation gets
\begin{align*}
    \mathbf{P} = \boldsymbol{\Theta} \mathbf{D}_{\boldsymbol{\Omega}}^{1/2} \tilde{\mathbf{Z}}^\top \boldsymbol{\Omega} = (\boldsymbol{\Theta} \mathbf{D}_{\boldsymbol{\Omega}}^{1/2})(\boldsymbol{\Omega} \tilde{\mathbf{Z}})^\top.
\end{align*}

Now perform the singular value decomposition of $\boldsymbol{\Theta} \mathbf{D}_{\boldsymbol{\Omega}}^{1/2}$:
\begin{align}
    \boldsymbol{\Theta} \mathbf{D}_{\boldsymbol{\Omega}}^{1/2} = \mathbf{Q} \boldsymbol{\Sigma} \mathbf{H}^\top, \label{eq:SVD}
\end{align}
where $\mathbf{Q}$ has orthonormal columns, $\boldsymbol{\Sigma} = \operatorname{diag}(\sigma_1,\dots,\sigma_K)$, and $\mathbf{H} \in \mathbb{R}^{K \times K}$ is an orthogonal matrix ($\mathbf{H}^\top \mathbf{H} = \mathbf{I}_K$). Substituting Equation \eqref{eq:SVD} yields
\begin{align}
    \mathbf{P} = \mathbf{Q} \boldsymbol{\Sigma} \mathbf{H}^\top (\boldsymbol{\Omega} \tilde{\mathbf{Z}})^\top = \mathbf{Q} \boldsymbol{\Sigma} (\boldsymbol{\Omega} \tilde{\mathbf{Z}} \mathbf{H})^\top. \label{eq:Pfinal}
\end{align}

We show that $\boldsymbol{\Omega} \tilde{\mathbf{Z}} \mathbf{H}$ has orthonormal columns. Compute
\begin{align}
    (\boldsymbol{\Omega} \tilde{\mathbf{Z}} \mathbf{H})^\top (\boldsymbol{\Omega} \tilde{\mathbf{Z}} \mathbf{H}) = \mathbf{H}^\top \tilde{\mathbf{Z}}^\top \boldsymbol{\Omega}^2 \tilde{\mathbf{Z}} \mathbf{H}. \label{eq:gram}
\end{align}

Using $\tilde{\mathbf{Z}} = \mathbf{Z} \mathbf{D}_{\boldsymbol{\Omega}}^{-1/2}$ gets
\begin{align*}
    \tilde{\mathbf{Z}}^\top \boldsymbol{\Omega}^2 \tilde{\mathbf{Z}} = \mathbf{D}_{\boldsymbol{\Omega}}^{-1/2} \mathbf{Z}^\top \boldsymbol{\Omega}^2 \mathbf{Z} \mathbf{D}_{\boldsymbol{\Omega}}^{-1/2}.
\end{align*}

Because $\mathbf{Z}^\top \boldsymbol{\Omega}^2 \mathbf{Z} = \mathbf{D}_{\boldsymbol{\Omega}}$ is diagonal with
\[
(\mathbf{Z}^\top \boldsymbol{\Omega}^2 \mathbf{Z})_{k\ell} = \sum_{i=1}^n Z_{ik} \boldsymbol{\omega}_i^2 Z_{i\ell} = \delta_{k\ell} \sum_{i:\mathbf{z}_i=k} \boldsymbol{\omega}_i^2 = \delta_{k\ell} \tilde n_k,
\]
where \(\delta_{k\ell}\) denotes the Kronecker delta defined as
\(\delta_{k\ell} = 
\begin{cases}
1, & \text{if } k = \ell,\\
0, & \text{if } k \neq \ell,
\end{cases}\)
we have
\begin{align}
    \tilde{\mathbf{Z}}^\top \boldsymbol{\Omega}^2 \tilde{\mathbf{Z}} = \mathbf{D}_{\boldsymbol{\Omega}}^{-1/2} \mathbf{D}_{\boldsymbol{\Omega}} \mathbf{D}_{\boldsymbol{\Omega}}^{-1/2} = \mathbf{I}_K. \label{eq:inner2}
\end{align}

Inserting Equation \eqref{eq:inner2} into Equation \eqref{eq:gram} gives
\begin{align*}
    (\boldsymbol{\Omega} \tilde{\mathbf{Z}} \mathbf{H})^\top (\boldsymbol{\Omega} \tilde{\mathbf{Z}} \mathbf{H}) = \mathbf{H}^\top \mathbf{I}_K \mathbf{H} = \mathbf{I}_K.
\end{align*}

Thus $\boldsymbol{\Omega} \tilde{\mathbf{Z}} \mathbf{H}$ has orthonormal columns, and from Equation \eqref{eq:Pfinal}, we can identify it as the right singular vector matrix:
\begin{align*}
    \mathbf{U} = \boldsymbol{\Omega} \tilde{\mathbf{Z}} \mathbf{H} = \boldsymbol{\Omega} \mathbf{Z} \mathbf{D}_{\boldsymbol{\Omega}}^{-1/2} \mathbf{H}.
\end{align*}

Now examine the rows of $\mathbf{U}$. For an observation $i$ with $\mathbf{z}_i = k$, $\mathbf{Z}_{i,:} = \mathbf{e}_k^\top$. Consequently, we have
\begin{align*}
    (\mathbf{Z} \mathbf{D}_{\boldsymbol{\Omega}}^{-1/2})_{i,:} = \mathbf{e}_k^\top \mathbf{D}_{\boldsymbol{\Omega}}^{-1/2} = \frac{1}{\sqrt{\tilde n_k}} \mathbf{e}_k^\top.
\end{align*}

Multiplying on the left by $\boldsymbol{\Omega}$ gives
\begin{align*}
    (\boldsymbol{\Omega} \mathbf{Z} \mathbf{D}_{\boldsymbol{\Omega}}^{-1/2})_{i,:} = \boldsymbol{\omega}_i \cdot \frac{1}{\sqrt{\tilde n_j}} \mathbf{e}_j^\top.
\end{align*}

Right‑multiplication by $\mathbf{H}$ then yields
\begin{align}
    \mathbf{U}_{i,:} = \frac{\boldsymbol{\omega}_i}{\sqrt{\tilde n_k}} (\mathbf{e}_k^\top \mathbf{H}) = \frac{\boldsymbol{\omega}_i}{\sqrt{\tilde n_k}} \mathbf{H}_{k,:}, \label{eq:Vrow}
\end{align}
where $\mathbf{H}_{k,:}$ denotes the $k$‑th row of $\mathbf{H}$.

Since $\mathbf{H}$ is orthogonal, each row has unit norm: $\|\mathbf{H}_{k,:}\| = 1$. Hence from Equation \eqref{eq:Vrow}, we get
\begin{align}
    \|\mathbf{U}_{i,:}\| = \frac{\boldsymbol{\omega}_i}{\sqrt{\tilde n_k}}. \label{eq:normV}
\end{align}

Since $\mathbf{U}_*$ by $(\mathbf{U}_*)_{i,:} = \mathbf{U}_{i,:} / \|\mathbf{U}_{i,:}\|$, using Equations \eqref{eq:Vrow} and \eqref{eq:normV} obtains
\begin{align*}
    (\mathbf{U}_*)_{i,:} = \frac{\frac{\boldsymbol{\omega}_i}{\sqrt{\tilde n_k}} \mathbf{H}_{k,:}}{\frac{\boldsymbol{\omega}_i}{\sqrt{\tilde n_k}}} = \mathbf{H}_{k,:}.
\end{align*}

Thus $(\mathbf{U}_*)_{i,:} = \mathbf{H}_{\mathbf{z}_i,:}$, which in matrix form is $\mathbf{U}_* = \mathbf{Z} \mathbf{H}$. Because each row of $\mathbf{Z}$ contains exactly one $1$, the $i$-th row of $\mathbf{U}_*$ equals the row of $\mathbf{H}$ indexed by $\mathbf{z}_i$. Consequently, all rows belonging to the same cluster are identical.

Finally, for two distinct clusters $k \neq \ell$ and any $i$ with $\mathbf{z}_i = k$, $i'$ with $\mathbf{z}_{i'} = \ell$,
\begin{align*}
    \|(\mathbf{U}_*)_{i,:} - (\mathbf{U}_*)_{i',:}\|^2 = \|\mathbf{H}_{k,:}\|^2 + \|\mathbf{H}_{\ell,:}\|^2 - 2\langle \mathbf{H}_{k,:}, \mathbf{H}_{\ell,:}\rangle = 1+1-0 = 2,
\end{align*}
so the Euclidean distance between rows of different clusters is $\sqrt{2}$. This completes the proof.
\end{proof}

\subsection{Proof of Theorem \ref{thm:exact}}
\begin{proof}
We apply Theorem 1 of \citep{cai2021subspace} to the matrix $\mathbf{X}^\top = \mathbf{P}^\top + \mathbf{E}^\top$, where $\mathbf{P}^\top = \boldsymbol{\Omega}\mathbf{Z}\boldsymbol{\Theta}^\top$ and $\mathbf{N}= \mathbf{E}^\top$ is the noise matrix with independent sub‑Gaussian entries satisfying $\|\mathbf{N}_{i,j}\|_{\psi_2}\le \eta$ for all $i,j$.

First, we verify conditions needed by Theorem 1 of \citep{cai2021subspace}. Set $d_1=n$, $d_2=p$, and the sampling rate $p_{\text{samp}}=1$ (full observation). For any \(c>0\), we set the threshold \(R = \eta\sqrt{2c\log d}\).  
Since each entry \(\mathbf{N}_{i,j}\) is sub‑Gaussian with \(\|\mathbf{N}_{i,j}\|_{\psi_2}\le\eta\), the standard tail bound gives  
\[
\mathbb{P}(|\mathbf{N}_{i,j}| \ge t) \le 2\exp\Bigl(-\frac{t^2}{2\eta^2}\Bigr),\qquad \forall t\ge0.
\]  

Substituting \(t = R\) yields  
\[
\mathbb{P}(|\mathbf{N}_{i,j}| > R) \le 2\exp\Bigl(-\frac{(\eta\sqrt{2c\log d})^2}{2\eta^2}\Bigr)
= 2\exp\Bigl(-\frac{2c\eta^2\log d}{2\eta^2}\Bigr)
= 2\exp\bigl(-c\log d\bigr)=2 d^{-c}.
\] 

Taking $c=12$ ensures the condition (10) of \citep{cai2021subspace} holds with $R$ satisfying $\frac{R^2}{\eta^2}\preceq\min\{\sqrt{np},p\}/\log d$ by Condition (\ref{MainNPKCond}). Hence Assumption 2 of \citep{cai2021subspace} is satisfied. Thus, the conditions in Equation (15) of \citep{cai2021subspace} with $p_{\text{samp}}=1$ become
\begin{align}\label{cai15}
1 &\ge c_0\max\left\{\frac{\mu\kappa_{\mathbf{P}}^4 K\log^2 d}{\sqrt{np}},\frac{\mu\kappa_{\mathbf{P}}^8 K\log^2 d}{p}\right\},\frac{\eta}{\sigma_K(\mathbf{P})}\le c_1\min\left\{\frac{1}{\kappa_{\mathbf{P}}\sqrt[4]{np}\sqrt{\log d}},\frac{1}{\kappa_{\mathbf{P}}^3}\sqrt{\frac{1}{n\log d}}\right\},
K\le c_2\frac{n}{\mu_1\kappa_{\mathbf{P}}^4}.
\end{align}

Our aim is to make the three inequalities in Equation (\ref{cai15}) always hold by using proper  sample size conditions and separation conditions. For $\sigma_{K}(\mathbf{P}), \kappa_{\mathbf{P}}$, and $\mu_1$, we have:
\begin{itemize}
  \item \(\sigma_K(\mathbf{P}) \ge \frac{\boldsymbol{\omega}_{\min}\Delta}{2\kappa}\sqrt{\frac{\beta n}{K}}\) by Lemma~\ref{lem:sigmaK}.
  \item \(\kappa_{\mathbf{P}} \le \kappa \frac{\omega_{\max}}{\omega_{\min}} \sqrt{\tau}\) by Lemma \ref{lem:kapparel}.
  \item By Equation (\ref{eq:normV}) and Lemma \ref{lem:Vstructure}, we have \(\|{\mathbf{U}}^\top\mathbf{e}_i\|_2^2 = \|\mathbf{U}_{i,:}\|_2^2 = \frac{\boldsymbol{\omega}_i^2}{\tilde{n}_k}\). For every $i$, we have
\(\|{\mathbf{U}}^\top\mathbf{e}_i\|_2^2 \le \frac{\boldsymbol{\omega}_{\max}^2}{\boldsymbol{\omega}_{\min}^2\beta n/K} = \frac{\boldsymbol{\omega}_{\max}^2 K}{\boldsymbol{\omega}_{\min}^2\beta n}\),
which gives \(\mu_1 = \frac{n}{K}\max_{i\in[n]}\|{\mathbf{U}}^\top\mathbf{e}_i\|_2^2 \le \frac{n}{K}\cdot\frac{\boldsymbol{\omega}_{\max}^2 K}{\boldsymbol{\omega}_{\min}^2\beta n} = \frac{\boldsymbol{\omega}_{\max}^2}{\boldsymbol{\omega}_{\min}^2\beta}\).
\end{itemize}

Therefore, to make the three inequalities in Equation (\ref{cai15}) always hold, we need
\begin{align*}
  np&\gg\frac{\mu^2\tau^4\kappa^8\boldsymbol{\omega}^8_{\max}}{\boldsymbol{\omega}^8_{\min}}K^2\log^4 d,\\
  p&\gg\frac{\mu\tau^4\kappa^8\boldsymbol{\omega}^8_{\max}}{\boldsymbol{\omega}^8_{\min}}K\log^2 d,\\
  n&\gg\frac{\tau^2\kappa^4\boldsymbol{\omega}^6_{\max}}{\boldsymbol{\omega}^6_{\min}}K,\\
  \Delta&\gg \frac{\eta\kappa^2\boldsymbol{\omega}_{\max}\sqrt{\tau}}{\boldsymbol{\omega}^2_{\min}\sqrt{\beta}}(\frac{p}{n})^{\frac{1}{4}}\sqrt{K\log d},\\
  \Delta&\gg\frac{\eta\kappa^4\boldsymbol{\omega}^3_{\max}\sqrt{\tau^3}}{\boldsymbol{\omega}^4_{\min}\sqrt{\beta}}\sqrt{K\log d},\\
\end{align*}
where these inequalities always hold by the sample size conditions and separation conditions stated in the theorem. Thus, all conditions of Theorem 1 in \citep{cai2021subspace} are satisfied.

Next we apply Theorem 1 of \citep{cai2021subspace} to obtain the $\ell_{2,\infty}$ distance between $\hat{\mathbf{U}}$ and $\mathbf{U}$.
Under the verified conditions, \citep[Theorem~1]{cai2021subspace} states that with probability at least $1-O(d^{-10})$, there exists an orthogonal matrix $\mathscr{O}\in\mathbb{R}^{K\times K}$ such that
\begin{align*}
\|\hat{\mathbf{U}}\mathscr{O} - \mathbf{U}\|_{2,\infty} \preceq \kappa_{\mathbf{P}}^2\sqrt{\frac{\mu K}{n}}\;\mathcal{E}_{\text{gen}},
\end{align*}
with
\[
\mathcal{E}_{\text{gen}} = \frac{\mu_1\kappa_{\mathbf{P}}^2 K}{n} + \frac{\eta^2\sqrt{np}\log d}{\sigma_K^2(\mathbf{P})} + \frac{\eta\kappa_{\mathbf{P}}\sqrt{n\log d}}{\sigma_K(\mathbf{P})},
\]
where the  first two
terms on the right-hand side of Equation (17) in Theorem 1 of \citep{cai2021subspace} are removed because there is no missing data.

From Lemma~\ref{lem:Vstructure}, $\min_i\|\mathbf{U}_{i,:}\| \ge \frac{\boldsymbol{\omega}_{\min}}{\boldsymbol{\omega}_{\max}}\sqrt{\frac{1}{n_{\max}}}$. For any vectors $a,b$ with $b\neq0$, $\|a/\|a\|-b/\|b\|\| \le 2\|a-b\|/\|b\|$. Hence, we have
\begin{align*}
\|\hat{\mathbf{U}}_*\mathscr{O} - \mathbf{U}_*\|_{2,\infty} &\le \frac{2}{\min_i\|\mathbf{U}_{i,:}\|}\,\|\hat{\mathbf{U}}\mathscr{O} - \mathbf{U}\|_{2,\infty}\le 2\frac{\boldsymbol{\omega}_{\max}}{\boldsymbol{\omega}_{\min}}\sqrt{n_{\max}}\; \|\hat{\mathbf{U}}\mathscr{O} - \mathbf{U}\|_{2,\infty}.
\end{align*}

Substituting the bound for $\|\hat{\mathbf{U}}\mathscr{O} - \mathbf{U}\|_{2,\infty}$ gets
\begin{align*}
\|\hat{\mathbf{U}}_*\mathscr{O} - \mathbf{U}_*\|_{2,\infty}
&\preceq 2\frac{\boldsymbol{\omega}_{\max}}{\boldsymbol{\omega}_{\min}}\sqrt{n_{\max}}\cdot \kappa_{\mathbf{P}}^2\sqrt{\frac{\mu K}{n}}\;\mathcal{E}_{\text{gen}}= 2 \kappa_{\mathbf{P}}^2\frac{\boldsymbol{\omega}_{\max}}{\boldsymbol{\omega}_{\min}}\sqrt{\frac{Kn_{\max}\mu }{n}}\;\mathcal{E}_{\text{gen}}.
\end{align*}

Since \(\kappa_{\mathbf{P}} \le \kappa \frac{\omega_{\max}}{\omega_{\min}} \sqrt{\tau}\) by Lemma \ref{lem:kapparel}, we have
\begin{align*}
\|\hat{\mathbf{U}}_*\mathscr{O} - \mathbf{U}_*\|_{2,\infty} \preceq \frac{\tau\kappa^2\boldsymbol{\omega}_{\max}^3}{\boldsymbol{\omega}_{\min}^3}\sqrt{\frac{\mu Kn_{\max}}{n}}\cdot \mathcal{E}_{\text{gen}}.
\end{align*}

By \(\sigma_K(\mathbf{P}) \ge \frac{\boldsymbol{\omega}_{\min}\Delta}{2\kappa}\sqrt{\frac{\beta n}{K}}\), \(\kappa_{\mathbf{P}} \le \kappa \frac{\omega_{\max}}{\omega_{\min}} \sqrt{\tau}\), \(\mu_1 \leq \frac{\boldsymbol{\omega}_{\max}^2}{\boldsymbol{\omega}_{\min}^2\beta}\), and the form of \(\mathcal{E}_{\text{gen}}\), we have
\begin{align*}
\|\hat{\mathbf{U}}_*\mathscr{O} - \mathbf{U}_*\|_{2,\infty} &\preceq \frac{\tau\kappa^2\boldsymbol{\omega}_{\max}^3}{\boldsymbol{\omega}_{\min}^3}\sqrt{\frac{\mu Kn_{\max}}{n}}(\frac{\tau\kappa^2K\boldsymbol{\omega}^4_{\max}}{\boldsymbol{\omega}^4_{\min}\beta n}+\frac{4\kappa^2\eta^2 K\sqrt{p}\log d}{\boldsymbol{\omega}^2_{\min}\beta\sqrt{n}\Delta^2}+\frac{2\kappa^2\boldsymbol{\omega}_{\max}\eta\sqrt{\tau K\log d}}{\boldsymbol{\omega}^2_{\min}\sqrt{\beta}\Delta})\\
&=\frac{\tau^2\kappa^4\boldsymbol{\omega}^7_{\max}\sqrt{\mu}}{\boldsymbol{\omega}^7_{\min}\beta}\sqrt{\frac{K^3n_{\max}}{n^3}}+\frac{4\tau\kappa^4\eta^2\boldsymbol{\omega}^3_{\max}\sqrt{\mu}}{\boldsymbol{\omega}^5_{\min}\beta}\frac{\log d\sqrt{K^3pn_{\max}}}{\Delta^2n}+\frac{2\kappa^4\boldsymbol{\omega}^4_{\max}\eta\sqrt{\tau^3\mu}}{\boldsymbol{\omega}^5_{\min}\sqrt{\beta}}\frac{K\sqrt{n_{\max}\log d}}{\Delta\sqrt{n}}\equiv\tilde{\mathcal{E}}_{\text{gen}}.
\end{align*}

By the sample size conditions (\ref{MainNPKCond}) and separation conditions (\ref{MainDeltaCond}), we have $\tilde{\mathcal{E}}_{\text{gen}}=o(1)$, which guarantees
\[
\|\hat{\mathbf{U}}_*\mathscr{O}- \mathbf{U}_*\|_{2,\infty} < \frac{\sqrt{2}}{2}.
\]

By Lemma~\ref{lem:Vstructure}, the rows of $\mathbf{U}_*$ satisfy:
\begin{itemize}
  \item  if $\mathbf{z}_i = \mathbf{z}_j = k$, then $\mathbf{U}_{*,i,:} = \mathbf{U}_{*,j,:}$;
  \item if $\mathbf{z}_i = k$, $\mathbf{z}_j = \ell$ and $k \neq \ell$, then $\|\mathbf{U}_{*,i,:} - \mathbf{U}_{*,j,:}\| = \sqrt{2}$.
\end{itemize}

Thus each cluster $k$ has a unique center $\mathbf{c}_k := \mathbf{U}_{*,i,:}$ for any $i$ with $\mathbf{z}_i = k$. We have shown that with probability at least $1-O(d^{-10})$,
\[
\|\hat{\mathbf{U}}_*\mathscr{O} - \mathbf{U}_*\|_{2,\infty} < \frac{\sqrt{2}}{2}.
\]

Hence for every $i\in[n]$, we have
\[
\|(\hat{\mathbf{U}}_*\mathscr{O})_{i,:} - \mathbf{U}_{*,i,:}\| < \frac{\sqrt{2}}{2}.
\]

Fix an index $i$ with true label $k$, and let $\ell \neq k$.  Pick any $j$ with $\mathbf{z}_j = \ell$ (such $j$ exists because all clusters are non‑empty because $\beta>0$).  By the triangle inequality, we have
\[
\|(\hat{\mathbf{U}}_*\mathscr{O})_{i,:} - \mathbf{c}_\ell\|
\ge \|\mathbf{c}_k - \mathbf{c}_\ell\| - \|(\hat{\mathbf{U}}_*\mathscr{O})_{i,:} - \mathbf{c}_k\|
> \sqrt{2} - \frac{\sqrt{2}}{2} = \frac{\sqrt{2}}{2}.
\]

Thus each row $(\hat{\mathbf{U}}_*\mathscr{O})_{i,:}$ is strictly closer to its own center $\mathbf{c}_k$ than to any other center $\mathbf{c}_\ell$ ($\ell\neq k$).  Therefore, applying $K$-means to the rows of $\hat{\mathbf{U}}_*\mathscr{O}$ recovers the true cluster labels exactly.  Since $\mathscr{O}$ is an orthogonal matrix, multiplying all rows by $\mathscr{O}$ preserves distances and only permutes the cluster labels. Consequently, $K$-means applied directly to the rows of $\hat{\mathbf{U}}_*$ yields the same clustering up to the permutation induced by $\mathscr{O}$.  Hence $\ell(\hat{\mathbf{z}},\mathbf{z}) = 0$.
\end{proof}
\subsection{Proof of Corollary \ref{cor:balanced}}
\begin{proof}
When $\tau = O(1)$, all cluster sizes are balanced. Then $\beta = O(1)$ and $n_{\max} = O(n / K)$. Hence, $K n_{\max} / n = O(1)$. Substituting $\tau = O(1)$, $\beta = O(1)$, $K n_{\max} / n = O(1)$, $\eta = O(1)$, $\mu = O(1)$, and $\kappa = O(1)$ into Equations \eqref{MainNPKCond} and \eqref{MainDeltaCond} yields the results in this corollary.
\end{proof}
\subsection{Proof of Lemma \ref{lem:incoherence}}
\begin{proof}
We establish the two bounds separately.

\paragraph{Upper bound for $\mu_0$}
From $\mathbf{P}= \boldsymbol{\Theta}\mathbf{Z}^\top\boldsymbol{\Omega}$ we have $\mathbf{P}_{j,i} = \boldsymbol{\omega}_i\,\boldsymbol{\Theta}_{j,\mathbf{z}_i}$ for $i\in[n],\,j\in[p]$. Using Assumption \ref{ass:entrynorm},
\begin{align*}
\max_{i,j}|\mathbf{P}_{j,i}| \le \boldsymbol{\omega}_{\max}\cdot\max_{j,k}|\boldsymbol{\Theta}_{j,k}| \le \boldsymbol{\omega}_{\max}C_{\boldsymbol{\Theta}},
\end{align*}
which gives
\begin{align*}
pn\max_{i,j}|\mathbf{P}_{j,i}|^2 \le pn\,\boldsymbol{\omega}_{\max}^2 C_{\boldsymbol{\Theta}}^2.
\end{align*}

For the denominator, we have
\begin{align*}
\|\mathbf{P}\|_{\mathrm{F}}^2 &= \sum_{i=1}^n\sum_{j=1}^p \boldsymbol{\omega}_i^2\boldsymbol{\Theta}_{j,\mathbf{z}_i}^2 = \sum_{k=1}^K\Bigl(\sum_{i:\mathbf{z}_i=k}\boldsymbol{\omega}_i^2\Bigr)\|\boldsymbol{\theta}_k\|^2 = \sum_{k=1}^K \tilde{n}_k\|\boldsymbol{\theta}_k\|^2.
\end{align*}

By basic algebra, we have $\tilde{n}_k \ge \boldsymbol{\omega}_{\min}^2 n_k \ge \boldsymbol{\omega}_{\min}^2\beta n/K$. By Assumption~\ref{ass:entrynorm}, $\|\boldsymbol{\theta}_k\|^2 \ge c^2 p$. Therefore, we have
\begin{align*}
\|\mathbf{P}\|_{\mathrm{F}}^2 \ge \sum_{k=1}^K \bigl(\boldsymbol{\omega}_{\min}^2\beta n/K\bigr)\,c^2 p = \boldsymbol{\omega}_{\min}^2\beta c^2 n p.
\end{align*}

Combining the estimates gives
\begin{align*}
\mu_0 = \frac{pn\max_{j,i}|\mathbf{P}_{j,i}|^2}{\|\mathbf{P}\|_{\mathrm{F}}^2}
\le \frac{pn\,\omega_{\max}^2 C_{\boldsymbol{\Theta}}^2}{\omega_{\min}^2\beta c^2 n p}
= \frac{\boldsymbol{\omega}_{\max}^2 C_{\boldsymbol{\Theta}}^2}{\boldsymbol{\omega}_{\min}^2\beta c^2}.
\end{align*}

\paragraph{Upper bound for $\mu_2$}
Note that $\boldsymbol{\Omega}\mathbf{Z}\in\mathbb{R}^{n\times K}$ has full column rank because $\mathbf{Z}$ is a membership matrix and $\omega_i>0$; indeed, its columns are orthogonal and nonzero. Consequently, we have
\begin{align*}
\operatorname{col}(\mathbf{P}) = \operatorname{col}(\boldsymbol{\Theta}) = \mathcal{S},
\end{align*}
where $\mathcal{S}=\operatorname{col}(\boldsymbol{\Theta})$ is the column space of $\boldsymbol{\Theta}$. The columns of $\mathbf{V}$ form an orthonormal basis of $\mathcal{S}$. Hence for any $j\in[p]$, the projection of $\mathbf{e}_j$ onto $\mathcal{S}$ satisfies
\begin{align*}
\mathcal{P}_{\mathcal{S}}\mathbf{e}_j = \mathbf{V}{\mathbf{V}}^\top\mathbf{e}_j,
\end{align*}
and therefore
\begin{align*}
\|\mathcal{P}_{\mathcal{S}}\mathbf{e}_j\|_2^2 = \mathbf{e}_j^\top\mathbf{V}{\mathbf{V}}^\top\mathbf{e}_j = \|{\mathbf{V}}^\top\mathbf{e}_j\|_2^2.
\end{align*}

By Assumption~\ref{ass:subspace}, we have 
\begin{align*}
\mu_2 = \frac{p}{K}\max_{j}\|{\mathbf{V}}^\top\mathbf{e}_j\|_2^2 \le \frac{p}{K}\cdot\frac{C_{\mathrm{inc}}K}{p} = C_{\mathrm{inc}}.
\end{align*}
\end{proof}
\section{Technical lemmas}
\begin{lem}\label{lem:sigmaK}
Under the Ih-GMM model, we have
\begin{align*}
\sigma_K(\mathbf{P}) \ge \frac{\boldsymbol{\omega}_{\min}\Delta}{2\kappa}\sqrt{\frac{\beta n}{K}}.
\end{align*}
\end{lem}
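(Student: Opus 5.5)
The bound follows by factoring $\mathbf{P}$ into a part depending only on the centers and a part depending only on labels and scales. We write $\mathbf{P}=\boldsymbol{\Theta}\,(\mathbf{Z}^\top\boldsymbol{\Omega})$, with $\boldsymbol{\Theta}\in\mathbb{R}^{p\times K}$ and $\mathbf{Z}^\top\boldsymbol{\Omega}\in\mathbb{R}^{K\times n}$. We then bound $\sigma_K$ of each factor separately and combine them with the product inequality $\sigma_K(\mathbf{A}\mathbf{B})\ge\sigma_K(\mathbf{A})\,\sigma_K(\mathbf{B})$.

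\textbf{Step 1: the product inequality.} This applies here because $\mathbf{A}=\boldsymbol{\Theta}$ has $K$ columns and $\mathbf{B}=\mathbf{Z}^\top\boldsymbol{\Omega}$ has $K$ rows. For any unit $\mathbf{x}$ in the row space of $\mathbf{B}$, we have $\|\mathbf{A}\mathbf{B}\mathbf{x}\|\ge\sigma_K(\mathbf{A})\|\mathbf{B}\mathbf{x}\|\ge\sigma_K(\mathbf{A})\sigma_K(\mathbf{B})$. The Courant--Fischer characterization over this $K$-dimensional subspace then gives the inequality. We may assume $\sigma_K(\boldsymbol{\Theta})>0$; otherwise $\kappa=\infty$ and the claimed bound is trivial.

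\textbf{Step 2: the label--scale factor.} The Gram matrix of this factor is $(\mathbf{Z}^\top\boldsymbol{\Omega})(\mathbf{Z}^\top\boldsymbol{\Omega})^\top=\mathbf{Z}^\top\boldsymbol{\Omega}^2\mathbf{Z}=\mathbf{D}_{\boldsymbol{\Omega}}$, exactly as computed in the proof of Lemma~\ref{lem:Vstructure}. Hence
\[
\sigma_K(\mathbf{Z}^\top\boldsymbol{\Omega})=\min_k\sqrt{\tilde n_k}\ge \boldsymbol{\omega}_{\min}\sqrt{n_{\min}}=\boldsymbol{\omega}_{\min}\sqrt{\beta n/K}.
\]

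\textbf{Step 3: the center factor.} We relate $\sigma_K(\boldsymbol{\Theta})$ to $\Delta$ through the condition number. For any $k\neq\ell$,
\[
\Delta\le\|\boldsymbol{\theta}_k-\boldsymbol{\theta}_\ell\|=\|\boldsymbol{\Theta}(\mathbf{e}_k-\mathbf{e}_\ell)\|\le\sqrt2\,\sigma_1(\boldsymbol{\Theta}).
\]
Therefore
\[
\sigma_K(\boldsymbol{\Theta})=\frac{\sigma_1(\boldsymbol{\Theta})}{\kappa}\ge\frac{\Delta}{\sqrt2\,\kappa}\ge\frac{\Delta}{2\kappa}.
\]
Multiplying the bounds from Steps 2 and 3 gives the lemma, in fact with the slightly better constant $\sqrt2$ in place of $2$.

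No step is a real obstacle. The only point requiring care is Step 3: one cannot lower-bound $\sigma_K(\boldsymbol{\Theta})$ by $\Delta$ directly, because center separation only controls the top singular value. The factor $\kappa$ is exactly what converts the bound on $\sigma_1(\boldsymbol{\Theta})$ into a bound on $\sigma_K(\boldsymbol{\Theta})$.
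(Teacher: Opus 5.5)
Your proposal is correct and follows essentially the same route as the paper. Both factor $\mathbf{P}=\boldsymbol{\Theta}(\mathbf{Z}^\top\boldsymbol{\Omega})$, apply $\sigma_K(\mathbf{P})\ge\sigma_K(\boldsymbol{\Theta})\,\sigma_K(\mathbf{Z}^\top\boldsymbol{\Omega})$, bound the label--scale factor by $\boldsymbol{\omega}_{\min}\sqrt{\beta n/K}$ via $\tilde n_k\ge\boldsymbol{\omega}_{\min}^2 n_k$, and use $\kappa$ to turn a lower bound on $\sigma_1(\boldsymbol{\Theta})$ into one on $\sigma_K(\boldsymbol{\Theta})$. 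The only differences are that you justify the product inequality explicitly and use $\|\boldsymbol{\Theta}(\mathbf{e}_k-\mathbf{e}_\ell)\|\le\sqrt{2}\,\sigma_1(\boldsymbol{\Theta})$ in place of the paper's triangle-inequality bound $\Delta\le 2\|\boldsymbol{\Theta}\|$, which gives the slightly better constant $\sqrt{2}$.
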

\begin{proof}[Proof of Lemma \ref{lem:sigmaK}]
Let $\tilde n_k = \sum_{i:\mathbf{z}_i=k}\boldsymbol{\omega}_i^2$ and $\tilde n_{(K)} = \min_k\tilde n_k$.  
Since $\mathbf{P}= \boldsymbol{\Theta}\mathbf{Z}^\top\boldsymbol{\Omega}$, we have $\sigma_K(\mathbf{P}) \ge \sigma_K(\boldsymbol{\Theta})\,\sigma_K(\mathbf{Z}^\top\boldsymbol{\Omega})$.  
The singular values of $\mathbf{Z}^\top\boldsymbol{\Omega}$ are $\sqrt{\tilde n_1},\dots,\sqrt{\tilde n_K}$, so $\sigma_K(\mathbf{Z}^\top\boldsymbol{\Omega}) = \sqrt{\tilde n_{(K)}}$.  
Recall that $n_k\ge\beta n/K$ and ${\omega}_i\ge\boldsymbol{\omega}_{\min}$, hence $\tilde n_k\ge \boldsymbol{\omega}_{\min}^2 \beta n/K$. Thus $\sigma_K(\mathbf{Z}^\top\boldsymbol{\Omega}) \ge \boldsymbol{\omega}_{\min}\sqrt{\beta n/K}$.  
For $\boldsymbol{\Theta}$, the separation $\Delta$ gives $\|\boldsymbol{\theta}_j-\boldsymbol{\theta}_\ell\|\ge\Delta$. The triangle inequality implies $\Delta\le 2\|\boldsymbol{\Theta}\|$, so $\sigma_1(\boldsymbol{\Theta})\ge\Delta/2$. Hence $\sigma_K(\boldsymbol{\Theta}) = \sigma_1(\boldsymbol{\Theta})/\kappa \ge \Delta/(2\kappa)$. Combining the bounds yields the result. 
\end{proof}

\begin{lem}\label{lem:kapparel}
Let \(\kappa_{\mathbf{P}} = \sigma_1(\mathbf{P}) / \sigma_K(\mathbf{P})\) be the condition number of \(\mathbf{P} \) and . Then
\[
\kappa_{\mathbf{P}}
\;\le\;
\kappa \,\frac{\boldsymbol{\omega}_{\max}\sqrt{\tau}}{\boldsymbol{\omega}_{\min}} .
\]
\end{lem}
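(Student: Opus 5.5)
We bound the largest and smallest singular values of $\mathbf{P}=\boldsymbol{\Theta}\mathbf{Z}^\top\boldsymbol{\Omega}$ separately, using the factorization from the proof of Lemma~\ref{lem:Vstructure}.

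There we wrote $\mathbf{P}=(\boldsymbol{\Theta}\mathbf{D}_{\boldsymbol{\Omega}}^{1/2})(\boldsymbol{\Omega}\tilde{\mathbf{Z}})^\top$. The matrix $\boldsymbol{\Omega}\tilde{\mathbf{Z}}$ has orthonormal columns, so the nonzero singular values of $\mathbf{P}$ coincide exactly with those of the $p\times K$ matrix $\boldsymbol{\Theta}\mathbf{D}_{\boldsymbol{\Omega}}^{1/2}$. This reduces the problem to a product of $\boldsymbol{\Theta}$ and a diagonal matrix.

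\emph{Upper bound.} Submultiplicativity gives
\[
\sigma_1(\mathbf{P})\le \sigma_1(\boldsymbol{\Theta})\,\max_k\sqrt{\tilde n_k}\le \sigma_1(\boldsymbol{\Theta})\,\boldsymbol{\omega}_{\max}\sqrt{n_{\max}},
\]
since $\tilde n_k=\sum_{i:\mathbf{z}_i=k}\boldsymbol{\omega}_i^2\le \boldsymbol{\omega}_{\max}^2 n_k$.

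\emph{Lower bound.} Because $\mathbf{D}_{\boldsymbol{\Omega}}^{1/2}$ is a $K\times K$ invertible matrix, we have $\sigma_K(AB)\ge\sigma_K(A)\sigma_K(B)$ for $A=\boldsymbol{\Theta}$ and $B=\mathbf{D}_{\boldsymbol{\Omega}}^{1/2}$. To verify this, note that for unit $x$, $\|ABx\|\ge\sigma_K(A)\|Bx\|\ge\sigma_K(A)\sigma_K(B)$, where the first step uses that $A$ has $K$ columns, so $\|Ay\|\ge \sigma_K(A)\|y\|$. Hence
\[
\sigma_K(\mathbf{P})\ge \sigma_K(\boldsymbol{\Theta})\,\min_k\sqrt{\tilde n_k}\ge \sigma_K(\boldsymbol{\Theta})\,\boldsymbol{\omega}_{\min}\sqrt{n_{\min}}.
\]

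Dividing the two bounds yields
\[
\kappa_{\mathbf{P}}\le \kappa\,\frac{\boldsymbol{\omega}_{\max}}{\boldsymbol{\omega}_{\min}}\sqrt{\frac{n_{\max}}{n_{\min}}}=\kappa\,\frac{\boldsymbol{\omega}_{\max}\sqrt{\tau}}{\boldsymbol{\omega}_{\min}}.
\]

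The only step needing care is the lower bound on $\sigma_K$ of a product. It requires $\boldsymbol{\Theta}$ to have rank $K$, so that $\sigma_K(\boldsymbol{\Theta})>0$, which is implicit in $\kappa$ being finite; if the rank is deficient, both sides are infinite and the statement is vacuous. This step must also be run on the $K\times K$ side, and not through $\mathbf{Z}^\top\boldsymbol{\Omega}$ directly, which has a nontrivial kernel. Working with $\mathbf{D}_{\boldsymbol{\Omega}}^{1/2}$ avoids that issue. The orthonormal-columns reduction also makes the argument symmetric, since we may equivalently use $\mathbf{P}^\top\mathbf{P}$ or $\mathbf{P}\mathbf{P}^\top=\boldsymbol{\Theta}\mathbf{D}_{\boldsymbol{\Omega}}\boldsymbol{\Theta}^\top$ and sandwich $\mathbf{D}_{\boldsymbol{\Omega}}$ between $\boldsymbol{\omega}_{\min}^2n_{\min}\mathbf{I}_K$ and $\boldsymbol{\omega}_{\max}^2 n_{\max}\mathbf{I}_K$ in the Loewner order.
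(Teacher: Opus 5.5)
Your proof is correct and follows essentially the same route as the paper: both write $\mathbf{P}$ as $\boldsymbol{\Theta}$ times a factor with singular values $\sqrt{\tilde n_1},\dots,\sqrt{\tilde n_K}$, bound $\sigma_1(\mathbf{P})\le\sigma_1(\boldsymbol{\Theta})\max_k\sqrt{\tilde n_k}$ and $\sigma_K(\mathbf{P})\ge\sigma_K(\boldsymbol{\Theta})\min_k\sqrt{\tilde n_k}$, and then use $\boldsymbol{\omega}_{\min}^2 n_k\le\tilde n_k\le\boldsymbol{\omega}_{\max}^2 n_k$. The only difference is that the paper applies the product bound directly to $\mathbf{B}^\top=\mathbf{Z}^\top\boldsymbol{\Omega}$, which is legitimate (e.g.\ via $\mathbf{P}\mathbf{P}^\top=\boldsymbol{\Theta}\mathbf{D}_{\boldsymbol{\Omega}}\boldsymbol{\Theta}^\top$), so your detour through the orthonormal factor $\boldsymbol{\Omega}\tilde{\mathbf{Z}}$ is a tidy justification rather than a necessity.
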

\begin{proof}[Proof of Lemma \ref{lem:kapparel}]
Define $\mathbf{B} = \boldsymbol{\Omega}\mathbf{Z} \in \mathbb{R}^{n\times K}$. Because $\mathbf{Z}$ is a membership matrix and $\boldsymbol{\Omega}=\operatorname{diag}(\omega_1,\dots,\omega_n)$,
\[
\mathbf{B}^\top \mathbf{B} = \mathbf{Z}^\top \boldsymbol{\Omega}^2 \mathbf{Z}
= \operatorname{diag}\bigl(\tilde{n}_1,\dots,\tilde{n}_K\bigr),
\]
where $\tilde{n}_k = \sum_{i: z_i = k} \omega_i^2$. Hence the columns of $\mathbf{B}$ are orthogonal and its singular values are $\sqrt{\tilde{n}_1},\dots,\sqrt{\tilde{n}_K}$. Consequently, we have
\[
\sigma_1(\mathbf{B}) = \sqrt{\max_k \tilde{n}_k}, \qquad
\sigma_K(\mathbf{B}) = \sqrt{\min_k \tilde{n}_k}.
\]

The matrix $\mathbf{P}= \boldsymbol{\Theta}\mathbf{B}^\top$. By  standard bounds, we have
\[
\sigma_1(\mathbf{P}) \le \sigma_1(\mathbf{B})\,\sigma_1(\boldsymbol{\Theta}),\qquad
\sigma_K(\mathbf{P}) \ge \sigma_K(\mathbf{B})\,\sigma_K(\boldsymbol{\Theta}).
\]

Now use the bounds on $\tilde{n}_k$ in terms of the cluster sizes $n_k$:
\[
\omega_{\min}^2 n_k \le \tilde{n}_k \le \omega_{\max}^2 n_k,
\]
so that
\[
\sqrt{\min_k \tilde{n}_k} \ge \omega_{\min}\sqrt{\min_k n_k},\qquad
\sqrt{\max_k \tilde{n}_k} \le \omega_{\max}\sqrt{\max_k n_k}.
\]

Thus, we have
\[
\kappa_{\mathbf{P}} = \frac{\sigma_1(\mathbf{P})}{\sigma_K(\mathbf{P})}
\le \frac{\sigma_1(\mathbf{B})\,\sigma_1(\boldsymbol{\Theta})}
{\sigma_K(\mathbf{B})\,\sigma_K(\boldsymbol{\Theta})}
\le \frac{\omega_{\max}\sqrt{\max_k n_k}\,\sigma_1(\boldsymbol{\Theta})}
{\omega_{\min}\sqrt{\min_k n_k}\,\sigma_K(\boldsymbol{\Theta})}
= \kappa\,\frac{\omega_{\max}}{\omega_{\min}}\sqrt{\tau}.
\]
\end{proof}
\bibliographystyle{elsarticle-num}
\bibliography{refIhGMM}

\end{document}